\documentclass{article} %
\usepackage{iclr2027_conference,times}

\usepackage{mathtools}
\usepackage{amsthm,amsmath,amsfonts,bm}

\newtheorem{proposition}{Proposition}
\newtheorem{corollary}{Corollary}
\newtheorem{theorem}{Theorem}

\def\eqref#1{equation~\ref{#1}}

\def\1{\bm{1}}

\def\ve{{\bm{e}}}

\def\vh{{\bm{h}}}

\def\vp{{\bm{p}}}

\def\vr{{\bm{r}}}

\def\vu{{\bm{u}}}

\def\vx{{\bm{x}}}

\DeclareMathAlphabet{\mathsfit}{\encodingdefault}{\sfdefault}{m}{sl}
\SetMathAlphabet{\mathsfit}{bold}{\encodingdefault}{\sfdefault}{bx}{n}

\newcommand{\KL}{D_{\mathrm{KL}}}

\DeclareMathOperator*{\argmax}{arg\,max}

\DeclareMathOperator{\supp}{supp}

\usepackage{algorithm}
\usepackage{algpseudocode}
\usepackage{hyperref}
\usepackage{url}
\usepackage{graphicx}
\usepackage{xstring}
\usepackage{xcolor}
\usepackage{makecell}
\usepackage{multirow}
\usepackage{float}
\usepackage{wrapfig}
\usepackage{booktabs,longtable,url,xcolor,amsmath}

\usepackage[most]{tcolorbox}

\newtcolorbox{promptbox}{
  colback=gray!10,
  colframe=gray!50,
  boxrule=0.5pt,
  arc=2pt,
  left=5pt,
  right=5pt,
  top=5pt,
  bottom=5pt
}

\newtcolorbox{generationbox}[1]{
  colback=white,
  colframe=gray!40,
  boxrule=0.5pt,
  arc=2pt,
  title=\textbf{#1},
  left=5pt,
  right=5pt,
  top=5pt,
  bottom=5pt
}

\newcommand{\authornote}[3]{{\color{#1}[\MakeUppercase{#2}: #3]}}
\newcommand{\note}[2][]{%
  \IfStrEq{#1}{}%
    {\PackageWarning{note}{Note: \detokenize{#2}}{\color{red}[#2]}}%
    {\PackageWarning{note}{Note by \detokenize{#1}: \detokenize{#2}}\authornote{red}{#1}{#2}}}

\makeatletter
\newcommand{\figwarntext}{}

\NewCommandCopy{\origincludegraphics}{\includegraphics}
\RenewDocumentCommand{\includegraphics}{O{} m}{%
  \IfEndWith{#2}{.pdf}{}{%
    \g@addto@macro\figwarntext{%
      \textcolor{red}{\textbf{WARNING: `\detokenize{#2}' is not a PDF.}}\ %
    }%
  }%
  \origincludegraphics[#1]{#2}%
}

\NewCommandCopy{\origcaption}{\caption}
\RenewDocumentCommand{\caption}{o m}{%
  \IfNoValueTF{#1}%
    {\origcaption{\figwarntext #2}}%
    {\origcaption[#1]{\figwarntext #2}}%
  \gdef\figwarntext{}%
}

\AddToHook{env/figure/begin}{\gdef\figwarntext{}}
\AddToHook{env/figure*/begin}{\gdef\figwarntext{}}
\AddToHook{env/table/begin}{\gdef\figwarntext{}}
\AddToHook{env/table*/begin}{\gdef\figwarntext{}}
\usepackage{amsthm}

\usepackage[capitalise]{cleveref}
\crefname{assumption}{Assumption}{Assumptions}

\renewcommand{\cite}{\citep}

\newcommand{\Dd}{\mathcal{D}}
\newcommand{\Ll}{\mathcal{L}}
\newcommand{\Uu}{\mathcal{U}}

\newcommand{\RR}{\mathbb{R}}
\newcommand{\EE}{\mathbb{E}}
\newcommand{\NN}{\mathbb{N}}
\renewcommand{\SS}{\mathbb{S}}

\newcommand{\penc}{\theta}
\newcommand{\Kpred}{K_\mathrm{pred}} %
\newcommand{\Naux}{N_\mathrm{aux}} %

\DeclareMathOperator{\Unif}{Unif}

\usepackage{booktabs}

\definecolor{myred}{RGB}{215,48,39}
\definecolor{mygreen}{RGB}{26,152,80}

\definecolor{chipgreen}{HTML}{d4ecd4}
\definecolor{chipgray}{HTML}{e8e8e8}

\definecolor{C0}{RGB}{31,119,180}
\definecolor{C1}{RGB}{255,127,14}
\definecolor{C2}{RGB}{44,160,44}
\definecolor{C3}{RGB}{214,39,40}
\definecolor{C4}{RGB}{148,103,189}
\definecolor{C5}{RGB}{140,86,75}
\definecolor{C6}{RGB}{227,119,194}
\definecolor{C7}{RGB}{127,127,127}
\definecolor{C8}{RGB}{188,189,34}
\definecolor{C9}{RGB}{23,190,207}

\title{Parallel Language Models\\via Variational Decoding}

\author{\textbf{Felix Draxler$^{1,*,\P}$, Robin C.~Geyer$^{2,3,*,\P}$, Daniel Geyfman$^{1,*}$, Farrin Marouf Sofian$^{1,*}$,} \\
\textbf{Justus Will$^{1,*}$, Philipp Hildebrandt$^{4}$, Giosue Migliorini$^{1}$, Prakhar Srivastava$^{1}$,} \\
\textbf{Gagan Sai Trinadh Duggaraju Peddiraju$^{1}$, Kushagra Pandey$^{1}$, Jorge da Silva Gon\c{c}alves$^{2}$,} \\
\textbf{Elijah Tavares$^{1}$, Jan Niklas Groeneveld$^{1}$, Yang Meng$^{1}$, Julia E.~Vogt$^{2}$, Stephan Mandt$^{1,\P}$} \\
\\
$^{1}$University of California, Irvine, CA, USA \\
$^{2}$Department of Computer Science, ETH Zurich, Switzerland \\
$^{3}$Institute for Artificial Intelligence in Medicine, Charité Berlin, Germany \\
$^{4}$Hasso Plattner Institute, Potsdam, Germany \\
\\
$^{*}$Equal contribution \quad $^{\P}$Project lead \quad See \cref{sec:contributions} for detailed contributions.\\
Correspondence: \texttt{mail@fdraxler.de}, \texttt{robin.geyer@charite.de}, \texttt{mandt@uci.edu}.
}

\iclrpreprint %
\begin{document}

\maketitle

\begin{abstract}
Autoregressive language models emit one token per forward pass. 
This causes high inference latency, in particular as models are scaled up for quality.
In this paper, we introduce Parallel Language Models (PLMs), which generate mutually consistent tokens in one forward pass by routing dependencies through latent auxiliary variables.
PLMs are trained using a variational bound that reduces to an autoregressive loss plus a parallel decoding loss.
Empirically, PLMs are on a faster and better Pareto frontier than autoregressive and discrete diffusion baselines.
Pretrained PLMs achieve near-autoregressive pretraining loss on model scales of 80M to 3B parameters, and speedup follows a power law in model size.
When post-trained, our 1.3B model matches or exceeds an identically trained autoregressive model on math, coding, and instruction-following benchmarks, speeding up inference $2.3\text{-}5.7\times$.
Together, we show that parallel generation can be trained from a single training objective at autoregressive quality.
We will make our code available at \url{https://github.com/mandt-lab/plm}.
\end{abstract}

\section{Introduction}

Large autoregressive models \cite{vaswani2017attention} are the core technical foundation of large language models (LLMs) \cite{brown2020language}. Autoregressivity comes with a significant downside, however: One model forward pass produces one token, causing high inference latency.

A plethora of approaches address this challenge, most prominently by increasing the speed of a given autoregressive model through smaller draft models in a process known as speculative decoding \cite{leviathan2023fast}. However, draft models do not reach the quality of their big counterparts, ultimately limiting the achievable speedup.
Discrete diffusion models \cite{hoogeboom2021argmax,austin2021structured} are an alternative modeling framework that predicts several tokens in one model call. 
However, a single model call cannot coordinate predicted tokens, requiring iterative refinement.
Few-step discrete diffusion models reduce the number of steps by amortizing iterative refinement \cite{monsefi2026fsdfm}.
Because diffusion models are more expensive to train, autoregressive models still dominate industry-scale inference.

This calls for a language modeling framework rooted in autoregression that can model arbitrary dependencies between predicted tokens in a single model call, combining effective training with fast inference at equal quality.
Recently, Parallel Token Prediction (PTP) presented a decoding framework that distills an existing autoregressive model into a fast decoder, predicting several tokens in parallel \cite{draxler2026parallel}. However, PTP was only trained as a speculative decoding method for an existing model, and the framework they proposed to train from sequence data alone is highly inefficient because it is iterative at training time.

In this work, we close this gap and propose Parallel Language Models (PLMs), a novel language modeling framework based on an autoregressive architecture that trains parallel LLMs directly from training data. In detail, we contribute:

\begin{itemize}
    \item We build Parallel Language Models based on a variational formulation, deriving effective pretraining and post-training strategies (\cref{sec:method}).
    \item We pretrain PLMs up to 3B parameters that are on a  faster and better Pareto frontier than autoregressive and discrete diffusion baselines (\cref{fig:performance-vs-speed}). Our scaling laws confirm near-autoregressive negative log-likelihoods, and show that the parallelism increases with model scale (\cref{sec:scaling-laws}).
    \item When post-trained, our 1.3B and 3B Parallel Language Models match or exceed autoregressive quality while answering $2.3\text{-}5.7\times$ faster (\cref{sec:post-training-comparison}).
\end{itemize}

\begin{figure}[t]
    \centering
    \includegraphics[width=\linewidth]{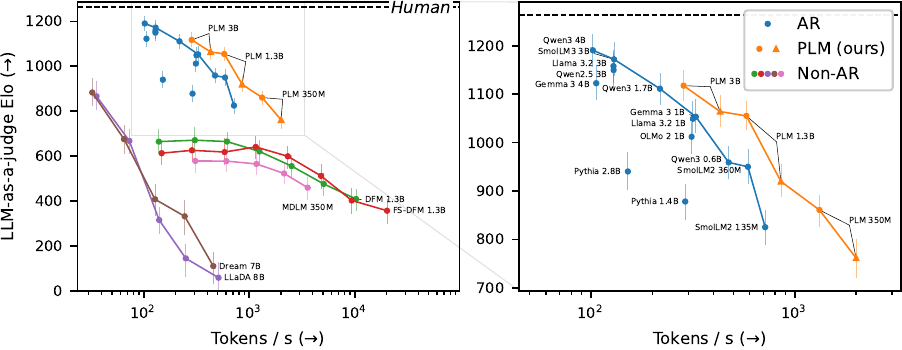}
    \caption{\textbf{Parallel Language Models (PLMs, \textcolor{C1}{orange}) shift the Pareto frontier of speed vs quality for language models trained from scratch.} \textit{(Right)}~At the same inference speed, PLMs outperform the quality of publicly available autoregressive models~(AR, \textcolor{C0}{blue}), despite the latter being trained on trillions of tokens and our models only trained on 25B~(PLM 350M) to 300B tokens~(PLM 3B). \textit{(Left)}~PLMs outperform discrete diffusion approaches at various NFEs. Inference speeds using \texttt{vLLM} on NVIDIA RTX 6000 Ada. PLM circle and triangle refer to \texttt{exact} respectively \texttt{relaxed} inference, see \cref{sec:inference}. Error bars are 95\% confidence intervals.}
    \label{fig:performance-vs-speed}
\end{figure}

\section{Related Work}

\paragraph{Parallel prediction and generation.}
Early non-autoregressive sequence models generate multiple output tokens in parallel, substantially reducing the number of sequential decoding steps \cite{gu2018nonautoregressive}.
More recently, multi-token prediction (MTP) predicts several future positions \cite{gloeckle2024better} and is used in large-scale autoregressive models such as DeepSeek-V3 \cite{deepseekai2025deepseekv3technicalreport}. These methods retain an autoregressive modeling framework rather than defining a joint parallel generative model.
Parallel Token Prediction (PTP) \cite{draxler2026parallel} instead introduces auxiliary random variables that enable multiple dependent future tokens to be generated jointly in one model call, and studies distillation from existing autoregressive models. \citet{samragh2025your} improve coherence of independent multi-token prediction by adding a small autoregressive head, and introduce Quadratic Decoding, which we adapt for PLM inference. Our work extends this line of research to pretraining of parallel language models directly from sequence data, developing a training recipe that scales to billions of parameters.

\paragraph{Iterative and few-step parallel generation.}
A complementary class of methods predicts many token positions in parallel and resolves dependencies through iterative refinement. Conditional Masked Language Models \cite{ghazvininejad2019mask} progressively fill and revise masked positions, while discrete diffusion language models such as MDLM \cite{sahoo2024simple}, LLaDA \cite{nie2025large}, and Dream \cite{ye2025dream} generate sequences through repeated denoising. Discrete Flow Matching \cite{gat2024discrete} provides a related framework for defining generative trajectories over discrete sequences, while Eso-LM \cite{sahoo2026esoteric} connects autoregressive and masked-diffusion modeling and improves diffusion inference. Recent work increasingly targets few-step generation: Infinite Mask Diffusion \cite{yoo2026infinite} and FS-DFM \cite{monsefi2026fsdfm} modify the generative process or training objective to improve performance under a small number of sampling steps, while flow-map approaches further compress iterative trajectories toward few- or single-step generation \cite{lee2026flow,potaptchik2026discrete}. Relatedly, Lookahead Decoding \cite{fu2024break} and Jacobi Forcing \cite{hu2026fast} generate tentative future tokens in parallel and recover consistency through verification or iterative refinement. In contrast, PLMs aim to model dependencies among simultaneously generated tokens directly within a single model call.

\paragraph{Autoregressive inference acceleration.}
A separate line of work accelerates inference from pretrained autoregressive language models. Speculative decoding uses a cheaper draft model to propose multiple tokens, which a target model verifies in parallel \citep{leviathan2023fast,chen2023accelerating}. Subsequent methods improve proposal quality through learned prediction heads or feature-based drafting, including Medusa \cite{cai2024medusa}, Hydra \cite{ankner2024hydra}, and EAGLE \cite{li2024eagle,li2024eagle2,li2025eagle3}. Unlike PLMs, these methods accelerate inference from a pre-existing autoregressive target model rather than training a parallel generative model directly from sequence data.

\paragraph{Latent-variable sequence models.}
Latent-variable models can also capture dependencies while retaining parallel sequence generation. FlowSeq \cite{ma2019flowseq} models non-autoregressive sequence generation through a flow-based latent distribution, while LaNMT \cite{shu2020latent} uses continuous latent variables and variational training for non-autoregressive translation. Parallel WaveNet \citep{oord2018parallel} similarly generates audio in a single pass by mapping per-timestep noise through an inverse autoregressive flow, but requires invertibility.
These approaches demonstrate how latent representations can mitigate the dependency limitations of factorized parallel decoders, but primarily target conditional sequence-to-sequence or audio generation. PLMs instead use token-level auxiliary variables to define a general language model in which dependencies among parallel predictions are mediated through the latent representation.

\section{Parallel Language Models}
\label{sec:method}

Autoregressive language models represent the distribution of language by explicitly predicting a histogram over possible choices from a token vocabulary $\{1, \dots, V\}$. This is an expressive paradigm, but it does not extend to predicting several tokens jointly because the possible token combinations $V^L$ scale exponentially in sequence length $L$.

One effective way to circumvent this is to use a language model that predicts sequences directly.
First, sample a random auxiliary $u_i \sim p(u_i)$ per token position $i = 1, \dots, L$. Then have a decoder map the vector of auxiliaries $\vu$ to a coherent sequence in one model call:
\begin{equation}
    \vx = f_\theta(\vu) \in \{ 1, \dots, V \}^L.
\end{equation}
Auxiliaries $u \in \Uu$ can either be continuous ($\Uu\subset\RR$) or discrete ($\Uu \subset \NN$), and we'll give more details in the subsequent sections.

Intuitively, the model generates different sequences for different random inputs. 
Prominent examples of such parallel decoders are Few-Step Discrete Flow Matching (FS-DFM) \cite{monsefi2026fsdfm} and Parallel Token Prediction (PTP) \cite{draxler2026parallel}. Both frameworks distill a parallel decoder from a pre-existing model, the former from a discrete diffusion model, and the latter from an autoregressive language model.

In the remainder of this section, we discuss the core framework of Parallel Language Models (PLMs). %
PLMs directly train on sequence data without relying on a pre-existing model. We provide a complete scalable recipe, covering pretraining, post-training, and inference.

\subsection{Self-Distilling Parallel Language Models on Sequence Data}
\label{sec:training-from-sequences}

\begin{figure}[t]
    \centering
    \includegraphics[width=\linewidth]{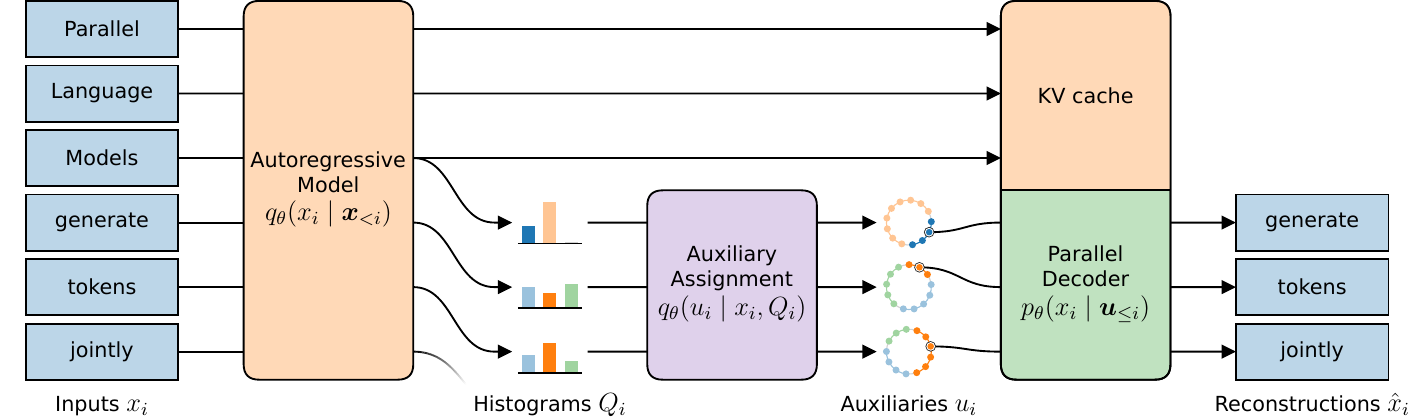}
    \caption{Parallel Language Models jointly train an encoder that maps training sequences $\vx$ to auxiliaries $\vu$, and a decoder that reconstructs them back to sequences $\hat \vx$ in parallel. The encoder leverages a standard autoregressive~(AR) model predicting next-token probabilities $Q_i = q_\theta(\,\cdot\mid\vx_{<i})$, which we then turn into auxiliaries $u_i \sim q_\theta(u_i \mid x_i, Q_i)$. See \cref{fig:attention-matrix} for KV cache details.}
    \label{fig:training}
\end{figure}

The Parallel Language Model we propose is a self-distillation method that jointly trains an autoregressive encoder $q_\penc(u_i \mid \vx_{\leq i})$ from sequences to auxiliaries, and a parallel decoder from auxiliaries to sequences $x_i = f_\theta(\vu_{\leq i})$, see \cref{fig:training}. Note that for notational simplicity, we remove dependency on a prompt or partially generated text in this derivation.

To train it, we derive our training objective from maximizing the likelihood of the training data under our model:
\begin{equation}
    \min_\theta -\EE_{\vx \sim p_\mathrm{data}}[\log p_\theta(\vx)].
    \label{eq:nll-loss}
\end{equation}

What is the probability of generating a sequence $\vx = \vx_{1:L}$ using a parallel decoder $x_i = f_\theta(\vu_{\leq i})$? It is given by integrating over all possible auxiliaries from their prescribed distribution $u_i \sim p(u_i)$, and then decoding them to the corresponding sequence:
\begin{equation}
    \log p_{\theta}(\vx) = \log \int \prod_{i=1}^L p(u_i) \, p_{\theta}(x_i \mid \vu_{\leq i}) \, d\vu.
    \label{eq:plm-generative-model}
\end{equation}
Note how we have switched from a deterministic decoder $f_\theta$ to a stochastic decoder $p_\theta$. This will allow taking gradients easily through a cross-entropy loss, and we take an $\argmax$ for inference (see \cref{sec:inference}). Crucially, the product distribution over future tokens ensures that we can predict tokens in parallel: All dependencies between tokens are described by the auxiliaries.

Optimizing the loss in \cref{eq:nll-loss} using the integral in \cref{eq:plm-generative-model} is intractable: if we were to approximate the integral using a finite number of random auxiliary samples, we would essentially never sample a given sequence even once. This is where the encoder $q_\penc(\vu \mid \vx) = \prod_{i=1}^L q_\penc(u_i \mid \vx_{\leq i})$ enters, proposing auxiliaries that should generate a given sequence. We rewrite \cref{eq:nll-loss} using the evidence lower bound (ELBO) \cite{kingma2013auto,zhang2018advances}, see \cref{app:elbo} for details:
\begin{align}
    -\log p_{\theta}(\vx) &
    \leq \EE_{\vu \sim q_\penc(\vu \mid \vx)}\!\left[ \sum_{i=1}^L -\log p_{\theta}(x_i \mid \vu_{\leq i}) \right] + \sum_{i=1}^L\KL(q_\penc(u_i \mid \vx_{\leq i}) \parallel p(u_i)).
    \label{eq:plm-elbo}
\end{align}
Remember that $p(u_i)$ specifies the distribution we sample the auxiliaries from during inference.

\Cref{eq:plm-elbo} is significantly easier to optimize than using \cref{eq:plm-generative-model}, since the encoder proposes auxiliaries $\vu$ for decoding a given sequence $\vx$. We show in \cref{app:no-posterior-collapse} that equality holds at the global minimum, and that we can get arbitrarily close to this global minimum with a rich enough model, and so optimizing \cref{eq:plm-elbo} actually optimizes \cref{eq:nll-loss}.

\paragraph{Auxiliary design} We now specify how we construct the auxiliary space. We formulate two desiderata that do not change the optimum of \cref{eq:plm-elbo} yet make the design of auxiliaries simple and flexible: (a)~Different tokens in the same context should be encoded to different auxiliaries. This allows decoding to be deterministic given auxiliaries and thus parallelizable. (b)~The share of auxiliaries assigned to a given token should be equal to its probability given the preceding tokens.

These properties can easily be fulfilled by splitting the encoder into a standard autoregressive model predicting the next-token distributions $Q_i = q_\theta(\,\cdot\mid\vx_{<i})$, and then assigning auxiliaries $q_\theta(u_i \mid x_i, Q_i)$ using that probability vector $Q_i$, with entries $Q_i(v)=q_\theta(x_i=v\mid\vx_{<i})$, and the token $x_i$. Concretely, we achieve the latter by designing an oracle decoding function $x_i = f(u; Q_i)$ that specifies which auxiliaries $u \in \Uu$ map to which tokens. This oracle uniquely defines a set of auxiliaries encoding to that token $\Uu_{x_i} = \{ u \in \Uu : x_i = f(u; Q_i) \}$. Then, the auxiliary encoder samples a random $u_i \in \Uu_{x_i}$ by restricting the auxiliary prior $p(u_i)$ to these allowed auxiliaries:
\begin{equation}
    q_\theta(u_i \mid x_i, Q_i) \propto p(u_i)\, \1[u_i \in \Uu_{x_i}].
    \label{eq:shape-matching}
\end{equation}
In this formulation, the disjointness and relative mass properties above read:
\begin{equation}
    (a) \quad \Uu_{x_i} \cap \Uu_{x_i'} = \emptyset \quad \text{for all } x_i \neq x_i'.
    \qquad
    (b) \quad p(u_i \in \Uu_{x_i}) = q_\theta(x_i \mid \vx_{<i}),
    \label{eq:plm-auxiliary-association-properties}
\end{equation}

For any oracle $f(u; Q_i)$ that fulfills these properties, \cref{eq:plm-elbo} can still be minimized given enough model capacity (cf. \cref{app:optimal-decoder}). For example, PTP \cite{draxler2026parallel} leveraged the quantiles of the histogram of a (fixed) autoregressive model $f^\text{PTP}(u; Q_i) = \text{ICDF}_{Q_i}(u)$, fulfilling the conditions in \cref{eq:plm-auxiliary-association-properties}.
In \cref{sec:auxiliaries}, we give our improved design choice for associating tokens and auxiliaries that follows \cref{eq:plm-auxiliary-association-properties}.

Crucially, if an encoder is constructed as above, and %
the shape of $q_\penc(u_i \mid \vx_{\leq i})$ matches the shape of $p(u_i)$ as in \cref{eq:shape-matching}, we find that:
\begin{equation}
    \label{eq:nll-bound}
    \KL(q_\penc(u_i \mid \vx_{\leq i}) \parallel p(u_i)) = -\log q_\penc(x_i \mid \vx_{<i}).
\end{equation}
In other words, the trainable part of the encoding is a standard autoregressive model, and we design the association between tokens and auxiliaries. Proof in \Cref{app:encoder-loss}.

Using the autoregressive encoder design, we find that computing gradients through the sampling of $\vu \sim q_\theta(\vu \mid \vx)$ is not needed as the optimum of \cref{eq:plm-elbo} does not change. We thus do not apply the reparameterization trick common in VAEs. This trivially allows using discrete auxiliary spaces, and stabilizes training empirically.

\paragraph{PLM training objective} Together, we obtain the following training objective for a Parallel Language Model, where $\mathrm{sg}[\cdot]$ implies a stopped gradient:
\begin{equation}\label{eq:loss}
    \Ll(\theta; \vx) = \underbrace{\EE_{\mathrm{sg}[\vu \sim q_\penc(\vu \mid \vx)]}\!\left[ \sum_{i=1}^L -\log p_{\theta}(x_i \mid \vu_{\leq i}) \right]}_{\Ll_\mathrm{dec}(\theta; \vx)} + \underbrace{\sum_{i=1}^L -\log q_\penc(x_i \mid \vx_{<i})}_{\Ll_\mathrm{enc}(\penc; \vx)}.
\end{equation}
It composes an autoregressive encoder with a parallel decoder. The above derivation confirms that this is a coherent way to train a parallel model. Next, we discuss its efficient implementation.

\subsection{Scalable Training}
\label{sec:scalable-implementation}

We now discuss practical design choices for training a Parallel Language Model that scales to billions of parameters and tokens.
At a high level, training \cref{eq:loss} on a given sequence $\vx_{1:L}$ means first encoding every token $x_i$ to its auxiliary $u_i$ with the encoder $q_\penc$, and then the decoder $p_\theta$ reconstructs it to $\hat x_i$, see \cref{fig:training}. 
Below is our recipe to make this efficient:

\paragraph{Weight-sharing encoder and decoder} Since a large part of the difficulty in predicting language is in parsing context, we share the model weights between the encoder $q_\penc$ and the decoder $p_\theta$. %
They only differ in the embedding layer, which embeds tokens from the vocabulary differently from the auxiliaries.

\paragraph{Block-autoregressive decoding} 
Empirically, we find that reconstruction becomes increasingly difficult with sequence length. Therefore, we cap the maximum decoding sequence length to a block size $K$. Concretely, for a training sequence of length $L>K$, we divide the input sequence $\vx$ into blocks of $K$ tokens. So to predict a block of tokens $\hat \vx_{i_z:i_z + K-1}$ starting at position $i_z$, the decoder receives the corresponding auxiliaries $\vu_{i_z:i_z + K-1}$ and the input tokens $\vx_{<i_z}$ before the block. To avoid any recomputation on these tokens before the decoding block, we let the decoder attend to the KV cache of the encoder, and pass gradients to it (but not through the sampling of auxiliaries).
\Cref{fig:attention-matrix} shows how this can be efficiently implemented with an attention mask that makes each decoder block attend to the encoder tokens up to the block start. We train all our models with block size $K=16$ tokens.

\begin{wrapfigure}[17]{r}{0.4\textwidth}
    \centering
    \includegraphics[width=\linewidth]{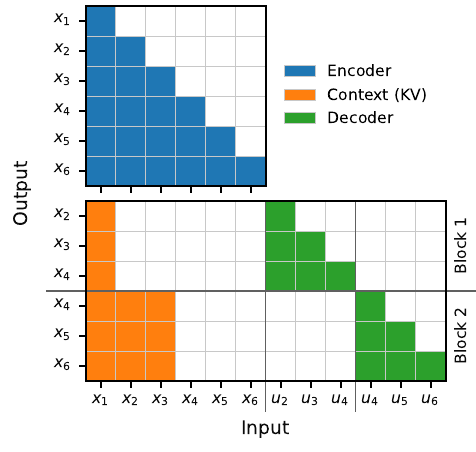}
    \caption{\textit{(Top)} The encoder $q_\penc$ has a standard causal attention mask (\textcolor{C0}{blue}). \textit{(Bottom)} The decoder $p_\theta$ attends to the input tokens before its block. For example, the first block attends to the first input token $x_1$ via the encoder (\textcolor{C1}{orange}), and predicts $\vx_{2:4}$ at the positions receiving auxiliaries $\vu_{2:4}$ (\textcolor{C2}{green}).}
    \label{fig:attention-matrix}
\end{wrapfigure}
\paragraph{Encoder-to-decoder ratio} While the optimum of \cref{eq:loss} can encode and decode perfectly at infinite model capacity, in practice model capacity is split between the two modes of the model. We therefore introduce a loss ratio $\beta$, analogous to the loss weight in a $\beta$-VAE \cite{higgins2017betavae}. We implement this ratio by not decoding all tokens that were encoded ($\beta > 1$), or by decoding every token several times ($\beta < 1$). For $\beta = 1$, every encoded token is decoded once. To this end, we use a stride $s = \lfloor \beta K \rfloor$ between blocks, so $\beta \approx s/K$. \Cref{fig:attention-matrix} shows an example for stride $s = 2$ at block size $K = 3$.
The explicit loss becomes, with $u^{(z)}_i \sim q_\penc(u_i \mid \vx_{\leq i})$ sampled once per block without gradient, block count $Z = \lceil L / s \rceil$ and block offsets $i_z=(z-1)s + 1$:
\begin{align}
    \Ll(\theta; \vx) =& \underbrace{\sum_{z=1}^{Z} \sum_{k=0}^{K-1} -\log p_{\theta}(x_{i_z + k} \mid \vx_{< i_z},\, \vu^{(z)}_{i_z : i_z + k})}_{\Ll_\mathrm{dec}(\theta; \vx)} \nonumber\\&+ \underbrace{\sum_{i=1}^L -\log q_\penc(x_i \mid \vx_{<i})}_{\Ll_\mathrm{enc}(\penc; \vx)}.
    \label{eq:plm-final-objective}
\end{align}
Random shifts of the block grid and truncation at sequence boundaries are described in \cref{app:pretraining-details}.
We denote models by PLM-$\beta$; the autoregressive baseline is recovered in the limit $\beta\to\infty$.

\paragraph{Geometry-aware auxiliaries}
\label{sec:auxiliaries}
\begin{wrapfigure}[11]{r}{0.25\textwidth}
 \centering
 \small
 \setlength{\abovecaptionskip}{3pt}
 \includegraphics[width=\linewidth]{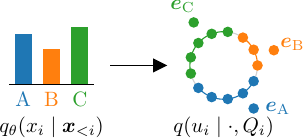}
 \caption{Auxiliaries are grouped around token embeddings $\ve_v$, with group sizes matching token probabilities.
 }
  \label{fig:encoding}
\end{wrapfigure}
We now describe an efficient and effective way to design the auxiliary space. We decide to leverage a set of $\Naux$ unit vectors living in the embedding space of the transformer $\ve_u \in \SS^{d-1}, \,u=1, \dots, \Naux$ as auxiliaries. The prior is uniform over these indices, $p(u)=1/\Naux$. Unlike quantiles over an arbitrary vocabulary order as in PTP \cite{draxler2026parallel}, embedding-space auxiliaries naturally exploit token similarity (see ablation in \cref{app:ablation}).

For assigning auxiliaries to tokens, we choose to assign every auxiliary $u$ to the most similar regular token embedding $\ve_v$, as measured by dot-product similarity in the embedding space. To ensure that token probabilities $Q_i=q_\theta(x_i \mid \vx_{<i})$ match the relative mass of auxiliaries (property (b) in \cref{eq:plm-auxiliary-association-properties}), we adjust this similarity using a scalar bias $w_v$ for each token
(see \cref{sec:aux-weight-fitting}):
\begin{equation}
    f^\text{Voronoi}(u; Q_i) = \argmax_v \{ \ve_v\cdot\ve_u + w_v(Q_i) \}.
    \label{eq:aux-scores}
\end{equation}
This is a discretized version of a weighted Voronoi tessellation \cite{aurenhammer1987power} on the sphere, see \cref{fig:encoding} for an example with three tokens and a 2-dimensional embedding space. We initialize the auxiliary embeddings randomly and leave them fixed during training.

At inference, we insert the embedding $\ve_u$ of a uniformly sampled $u \in \{1, \dots, \Naux\}$ into the parallel decoder.

Note that a finite number of $\Naux$ auxiliaries has a limited resolution of representing token probabilities. We choose $\Naux=512$ and feed a mask auxiliary for any token that is not among the top-128 predicted by the AR model. During training, the mean total variation between the fitted cell masses and the rounded target masses was below 0.3\% (see \cref{app:pretraining-details}).

\paragraph{Representation alignment}
Each training step looks at the sequence twice: once autoregressively, and once at parallel decoding. We find that adding a representation alignment loss \cite{yu2025repa} that aligns the hidden state of the parallel decoder to the autoregressive encoder significantly speeds up training, achieving the same decoding quality at approximately half the training compute (see \cref{app:ablation}). We use the following regularization:
\begin{equation}
    \Ll_\mathrm{align}(\theta, \psi; \vx) = \sum_{z=1}^{Z} \sum_{k=0}^{K-1} \Bigl( 1 - \cos\bigl( g_\psi\bigl(\vh^{(z)}_{i_z + k}\bigr), \mathrm{sg}\bigl[\vh_{i_z + k}\bigr] \bigr) \Bigr).
    \label{eq:align-loss}
\end{equation}
Here, $\vh^{(z)}_{i}$ is the hidden state of the parallel decoder at the position that predicts $x_i$ in block $z$, and $\vh_i$ is the autoregressive state that predicts the same token after reading $\vx_{<i}$. We read them at layers $\ell_\mathrm{dec} < \ell_\mathrm{enc}$ (values in \cref{app:pretraining-details}). This asks the decoder to arrive early at what the encoder reaches later. The stop gradient $\mathrm{sg}[\cdot]$ makes the decoder chase the encoder and never the reverse, and the learned affine map $g_\psi$ facilitates alignment. We add $\lambda_\mathrm{align}\Ll_\mathrm{align}$ to \cref{eq:plm-final-objective} and drop $g_\psi$ after training, so inference is unchanged.

\subsection{Post-Training}
\label{sec:post-training}

We post-train PLMs in two stages. First, supervised fine-tuning (SFT)
adapts the model to instruction--response data using the joint
encoder--decoder objective of \cref{eq:plm-final-objective}, with the loss
restricted to response tokens so that the prompt serves only as context. We train on a
mixture of general instruction-following, mathematics, coding, and precise
instruction-following data. Dataset and optimization details are given in
\cref{app:sft-details}.

We then apply Direct Preference Optimization (DPO)
\cite{rafailov2023direct} jointly to the two modes of the PLM: the
autoregressive encoder and the parallel decoder. Let $\theta$ denote their
shared parameters and $\theta_{\mathrm{ref}}$ a frozen SFT reference. For a
prompt--response sequence $\vx=(\vp,\vr)$, let $\Ll_{\mathrm{enc}}^{\vr}(\theta;\vx)$ and $\Ll_{\mathrm{dec}}^{\vr}(\theta;\vx,\vu)$ denote the response-restricted encoder and decoder components of \cref{eq:plm-final-objective}. Here, $\Ll_{\mathrm{dec}}^{\vr}(\theta;\vx,\vu)$ denotes the response-restricted decoder loss of \cref{eq:plm-final-objective} for a fixed auxiliary realization $\vu$, rather than averaging over newly sampled auxiliaries. We define the response score
\begin{equation}
s_\theta(\vx,\vu)
=
-
\frac{
1
}{
L^{\vr}(1 + 1/\beta)
}(\Ll_{\mathrm{enc}}^{\vr}(\theta;\vx)
+
\Ll_{\mathrm{dec}}^{\vr}(\theta;\vx,\vu)).
\label{eq:dpo-plm-score}
\end{equation}

The fraction normalizes by the total number of terms, preventing the response length $L^{\vr}$ from dominating the score.

DPO trains on preference triples $(\vp,\vr^+,\vr^-)$, where $\vr^+$ is the
preferred and $\vr^-$ the rejected response to the prompt $\vp$. Write \(\vx^\pm=(\vp,\vr^\pm)\). For each of
the two candidate responses, we sample auxiliaries from the frozen reference encoder,
$
\vu^\pm\sim q_{\theta_{\mathrm{ref}}}(\vu\mid\vx^\pm),
$
where each auxiliary is drawn from the
observed target token's Voronoi cell. The same candidate-specific auxiliaries are used to evaluate the policy and reference. They remain samples from the frozen reference encoder as the policy changes, keeping the comparison stationary conditional on the sampled auxiliary realization.

Define
$
\delta_\theta(\vx,\vu)
=
s_\theta(\vx,\vu)
-
s_{\theta_{\mathrm{ref}}}(\vx,\vu).
$
We then optimize
\begin{equation}
\Ll_{\mathrm{DPO}}(\theta)
=
-
\EE_{\substack{
(\vp,\vr^+,\vr^-)\sim\Dd_{\mathrm{pref}}\\
\vu^+\sim q_{\theta_{\mathrm{ref}}}(\cdot\mid\vx^+)\\
\vu^-\sim q_{\theta_{\mathrm{ref}}}(\cdot\mid\vx^-)
}}
\log\sigma\left(
\gamma
\left[
\delta_\theta(\vx^+,\vu^+)
-
\delta_\theta(\vx^-,\vu^-)
\right]
\right),
\label{eq:plm-dpo-loss}
\end{equation}
where $\gamma$ is the DPO inverse temperature. For the autoregressive
baseline, we use the same objective with the decoder component omitted from
\cref{eq:dpo-plm-score}.

Unlike canonical DPO, which uses summed sequence log-probabilities, our PLM score replaces the intractable marginal response log-likelihood with the auxiliary-conditioned composite training score in \cref{eq:dpo-plm-score} and normalizes it by the number of prediction terms. We therefore optimize a length-normalized DPO-style surrogate. This normalization is similar to the length-normalized DPO of \citet{lambert2025tulu}. SimPO \citep{meng2024simpo} uses the same normalization but drops the reference model, which we retain. Further details are given in \cref{app:dpo-details}. %

\subsection{Inference}
\label{sec:inference}

During inference, tokens are generated in parallel via Quadratic Decoding \cite{samragh2025your}, whose two main components, proposal and verification, we adapt for PLMs. First, a block of $\Kpred$ coordinated tokens $x_j = f_\theta(\vx_{<i}, \vu_{i:j})$, $i \leq j < i+\Kpred$ is produced by feeding independently randomly drawn auxiliaries $u_j \sim p(u_j)$ (cf.~\cref{sec:auxiliaries}) into our trained model $p_\theta$ and then taking the $\argmax$ of the produced logits. The inference block size $\Kpred$ is set at inference time and can be smaller than the training block size $K$. Second, the proposed tokens are verified by the encoder against a predefined rule and a subset of accepted tokens is appended to the context. While our model is trained to predict $K$ tokens, this step is necessary, as in practice finite model capacity degrades proposal quality with increased distance to the context. Quadratic Decoding allows both steps to be done in a single efficient model call, by making $\Kpred$ proposals of which only one is selected after verification. While this widened model call increases the computational cost, for small $\Kpred$ the inference overhead is negligible (cf. \Cref{tab:post-train-spec}), as the extra work is efficiently parallelized on one GPU by adopting a custom attention matrix. See \cref{app:inference} for algorithmic details.

Generally, a verification rule is set to guarantee high quality samples. In speculative decoding, based on an existing AR model, this is usually achieved by forcing the decoding algorithm to produce the same sample distribution as the reference. PTP \cite{draxler2026parallel}, for example, forces the generation to be token-identical to a random rollout of the AR model. Here, we argue that this strict verification artificially harms throughput. If our model creates a plausible continuation, we avoid discarding it by loosening our acceptance rule to also admit all tokens that have high likelihood under our encoder, specifically ones that are in the $\rho$-nucleus of $q_\theta(x_i \mid \vx_{<i})$ or ones with $q_\theta(x_i \mid \vx_{<i}) \geq \epsilon$. \Cref{tab:post-training} shows this increases the number of accepted tokens per call of the post-trained models by an average of $46.5\%$ at comparable quality.

\section{Experiments}
\label{sec:experiments}

We empirically evaluate Parallel Language Models (PLMs) by their quality and wall-clock speed compared to other language model frameworks.

To this end, we pretrain PLMs of 80M, 150M, 350M, 1.3B and 3B parameters on a general mix of tokens. First, we fit scaling laws to PLMs. We find that the text quality of the encoder follows that of an identically trained autoregressive model, and that the average number of tokens the decoder can reconstruct in a single model call increases with model size, see \cref{sec:scaling-laws}. Compared to competing language model frameworks, PLMs achieve significant speedups (\cref{sec:pretraining-comparison}). Finally, we post-train and evaluate PLMs on math, coding, and instruction-following benchmarks, finding comparable or better quality compared to identically trained autoregressive models, but at about $2.3\text{-}5.7\times$ the speed.

\subsection{Pretraining}

\begin{figure}
    \centering
    \includegraphics[width=\linewidth]{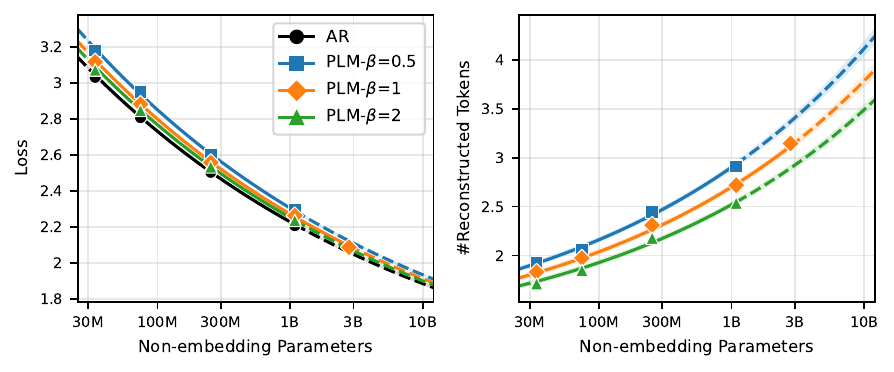}
    \caption{\textbf{PLMs become better and more parallel with scale.} \textit{(Left)} PLMs (colored) achieve comparable quality to an AR model of the same size (black). \textit{(Right)} PLMs achieve more correct tokens with greater model size. Varying the encoder-to-decoder ratio $\beta$ reduces quality and increases speed. Markers show measurements at token to non-embedding parameter ratio $D/N=90$, solid lines are \cref{eq:scaling-laws}.
    }
    \label{fig:scaling-laws}
\end{figure}

\subsubsection{Scaling Laws}
\label{sec:scaling-laws}

We study how language quality and parallelism improve with PLM size and training tokens. Quality is measured by the encoder's negative log-likelihood per token, $\Ll_\mathrm{enc}$, as is standard for autoregressive models. Our parallelism metric $R$ estimates the average number of training tokens correctly reconstructed in parallel per model call (see \cref{app:formal-reconstructed-tokens} for a formal definition). An autoregressive model has $R=1$.

We pretrain PLM-$\beta$=0.5, PLM-$\beta$=1 and PLM-$\beta$=2, corresponding to encoder-to-decoder ratios $\beta=0.5,1,2$ (\cref{sec:scalable-implementation}), at sizes of 80M, 150M, 350M and 1.3B parameters on ClimbMix \cite{diao2025climb}. We also train matching autoregressive baselines and add a 3B PLM-$\beta$=1 extrapolation. All PLMs use decoder blocks of $K=16$ tokens. Token budgets reach $D/N \approx 100$, where $N$ denotes non-embedding parameters and $D$ training tokens. Hyperparameters follow Step Law \cite{li2025predictable}, with details in \cref{app:pretraining-details}.

\Cref{fig:scaling-laws} shows that \textbf{larger models reconstruct more tokens at a lower loss}. We fit Chinchilla-style laws \cite{hoffmann2022training} jointly to AR and PLM runs, modeling the loss through a normalized effective model size $n_\mathrm{eff}= n\beta/(\beta+0.48)$ and $R$ through a power law above the autoregressive baseline:
\begin{equation}
  \Ll_\mathrm{enc} \approx \Ll_\mathrm{AR}(n_\mathrm{eff}, d),
  \quad
  R \approx 1 + 1.05\,
  \frac{n^{0.120}\,d^{0.094}}{\beta^{0.16}},
  \label{eq:scaling-laws}
\end{equation}
\begin{equation}
  \Ll_\mathrm{AR}(n, d) \approx 1.105
  + \frac{0.851}{n^{0.194}}
  + \frac{0.763}{d^{0.127}}.
\end{equation}
Here, $n=N/10^8$ and $d=D/10^{10}$ express non-embedding parameters in units of 100M and training tokens in units of 10B. Both laws recover the autoregressive case as $\beta\to\infty$.

Over the fitted range, at fixed $\beta$, the absolute loss gap to an AR model of the same size therefore shrinks as models grow, while $R$ increases with both model size and training tokens. Thus, as depicted in \cref{fig:scaling-laws}, \textbf{PLMs become better and more parallel with scale}, and $\beta$ controls the quality--parallelism trade-off. Fit details, uncertainty estimates and the 3B extrapolation check are in \cref{app:scaling-law-fits}.

\subsubsection{Pretraining Model Comparison}
\label{sec:pretraining-comparison}

Parallel Language Models compete with other language models that allow predicting more than one token per model call. However, the latter often cannot model dependencies between tokens that are predicted in the same step, or require substantially longer training.

We therefore compare the speed and quality of publicly available autoregressive models, discrete diffusion models, and few-step discrete diffusion models to our PLMs at $D/N=100$ from \cref{sec:scaling-laws}. To this end, we ask a variety of models to continue the same 200 prompts for an additional 128 tokens, where each prompt is a 64-token opening of a randomly sampled FineWeb-Edu document \cite{lozhkov2024fineweb-edu}. Quality is measured using Phi-4 \cite{abdin2024phi} as a judge, and we fit Bradley-Terry Elo scores on the resulting data \cite{hunter2004mm}.

\Cref{fig:performance-vs-speed} shows that our framework \textbf{sets a new standard on the Pareto frontier of speed vs quality for pretrained models}, despite having trained on substantially fewer tokens than commercially trained models. PLMs also \textbf{outperform all tested discrete diffusion models, including few-step models}. The AR models use temperature $1$ sampling among the top $128$ tokens for a fair comparison to the geometry-aware auxiliaries used in our PLM model, as described in \Cref{sec:scalable-implementation}. We also include the actual FineWeb-Edu document completion as a human reference, which consistently scores higher than any model in the benchmark. 

We benchmark PLM with strict speculative decoding, and relaxed speculative decoding as in \Cref{sec:inference}. \Cref{fig:performance-vs-speed} shows that this relaxed acceptance rule gives an increase in speed for a small drop in quality within a fixed model size. Additionally, as we increase the model size, the PLMs' ability to predict more tokens per call increases, going from 1.95 during exact speculative decoding at 350M to 2.26 at 3B. 

In \cref{app:ablation,sec:pretraining-eval-details}, we report additional results and provide details on the experimental setup.

\subsection{Post-Training}
\label{sec:post-training-comparison}

We now confirm that Parallel Language Models can be fine-tuned to specific tasks, just like autoregressive models. To this end, we post-train our 1.3B and 3B PLMs using SFT and DPO, see \cref{sec:post-training}. We also post-train our purely autoregressive 1.3B baseline.

\cref{tab:post-training-short} shows that our post-trained PLMs match or exceed the quality of an identically post-trained 1.3B autoregressive baseline, all while sampling approximately $2.3\text{-}5.7\times$ faster, depending on the task. In addition, we confirm that our 3B PLM improves quality further, again sampling significantly faster than an autoregressive model of the same size would.

\begin{table}[t]
  \centering
  \caption{\textbf{Post-trained 1.3B PLMs are comparable to or better than an autoregressive model at $2.3\times$ to $5.7\times$ the inference speed.} The 3B PLM-$\beta=1$ model achieves better quality than all 1.3B variants. ``Q / s'' is questions per second. $^\dagger$We do not train an autoregressive model, so we use the 3B PLM in autoregressive inference mode for comparison.}
  \label{tab:post-training-short}
  \small
  \resizebox{\linewidth}{!}{\begin{tabular}{lllcccccccc}
\toprule
\multirow{2}{*}[-\dimexpr(\aboverulesep+\belowrulesep+\cmidrulewidth)*1/2\relax]{Size} & \multirow{2}{*}[-\dimexpr(\aboverulesep+\belowrulesep+\cmidrulewidth)*1/2\relax]{$\beta$} & \multirow{2}{*}[-\dimexpr(\aboverulesep+\belowrulesep+\cmidrulewidth)*1/2\relax]{Decoding} & \multicolumn{2}{c}{GSM8K} & \multicolumn{2}{c}{MATH-500} & \multicolumn{2}{c}{HumanEval} & \multicolumn{2}{c}{IFEval} \\
\cmidrule(lr){4-5}\cmidrule(lr){6-7}\cmidrule(lr){8-9}\cmidrule(lr){10-11}
 & & & Score & Q / s & Score & Q / s & Score & Q / s & Score & Q / s \\
\midrule
\multirow{4}{*}{1.3B} & AR & AR & 27.9 & 0.9 (1.0\texttimes) & 7.4 & 0.4 (1.0\texttimes) & 12.8 & 0.7 (1.0\texttimes) & \textbf{48.2} & 0.8 (1.0\texttimes) \\
 & $0.5$ & Relaxed & \textbf{32.6} & \textbf{5.2 (5.7\texttimes)} & \textbf{8.4} & \textbf{1.4 (3.7\texttimes)} & \textbf{17.7} & 3.0 (4.5\texttimes) & 45.1 & \textbf{1.9 (2.4\texttimes)} \\
 & $1$ & Relaxed & 29.4 & 4.9 (5.4\texttimes) & 7.2 & 1.2 (3.1\texttimes) & 10.4 & \textbf{5.2 (7.8\texttimes)} & 45.7 & \textbf{1.9 (2.5\texttimes)} \\
 & $2$ & Relaxed & 29.5 & 4.8 (5.2\texttimes) & \textbf{8.4} & 1.0 (2.8\texttimes) & 15.8 & 3.0 (4.5\texttimes) & 48.1 & 1.8 (2.3\texttimes) \\
\midrule
\multirow{2}{*}{3B} & \multirow{2}{*}{$1$} & AR$^\dagger$ & \textbf{39.0} & 0.4 (1.0\texttimes) & 12.2 & 0.2 (1.0\texttimes) & 23.2 & 0.4 (1.0\texttimes) & \textbf{54.0} & 0.4 (1.0\texttimes) \\
 &  & Relaxed & 37.4 & \textbf{2.1 (4.8\texttimes)} & \textbf{14.8} & \textbf{0.7 (3.8\texttimes)} & \textbf{32.3} & \textbf{1.5 (4.1\texttimes)} & 53.6 & \textbf{1.0 (2.5\texttimes)} \\
\bottomrule
\end{tabular}
}
\end{table}

\section{Conclusion}

This paper derives a complete and scalable recipe to build language models from scratch that predict coordinated tokens in parallel. Already at the model scales and token budgets trained for this paper, there is a clear inference speed advantage over autoregressive models of equal quality, even over those trained with significantly higher budget.

Going forward, our mathematical framework and our scaling laws indicate no fundamental boundaries that would prevent scaling our framework to industry scales. Instead, they predict that scaled up models achieve autoregressive quality, in particular after post-training, and generate many tokens in parallel. We hope this opens a research avenue that enables autoregressive models to be as parallel in inference as they already are in training.

\subsection*{AI Use Statement}

In this work, we used generative AI tools for 
formulating mathematical claims,
assisting in the writing of proofs design, 
and
implementing methods.
We have not used generative AI tools for
helping develop theoretical models or conceptual frameworks,
providing critical ingredients for proving mathematical claims, or
interpreting results.
The categories
generating synthetic data sets,
proposing or refining hypotheses,
assisting with translation,
cleaning and reformatting datasets, and
supporting qualitative and thematic data analysis
are not applicable to this work. 

Additionally, we used generative AI tools for 
creating or modifying scientific figures or images,
creating or editing software code,
drafting parts of our research paper,
summarizing or analyzing existing literature,
sourcing/searching for information, 
editing our research paper to improve readability, and
identifying relevant literature.
We have reviewed all AI-assisted work by verifying LLM-generated code by two authors, verifying information, and by checking for potential plagiarism for research ideas through manual literature survey.
We take responsibility for the final content of this work, including text, claims or artifacts produced with the aid of generative AI.

\subsection*{Ethics Statement}

The authors do not feel that their paper submission raises any questions \ificlrcameraready regarding the ICLR Code of Ethics \fi beyond the general risks inherent to developing and distributing AI models.

\subsection*{Reproducibility Statement}

We have included all details necessary to replicate our experiments in \cref{app:pretraining-details,app:post-training-details}, and all details necessary to derive our method are in \cref{app:derivations}. We will publish all our code upon publication.

\ificlrfinal

\subsubsection*{Acknowledgments}
Stephan Mandt acknowledges support from the National Science Foundation (NSF) through CAREER Award IIS-2047418 and Grant IIS-2611549. The authors acknowledge the Swiss AI Initiative for providing the computational resources required for this work. The computations were performed on the Clariden vCluster on the Alps supercomputing infrastructure operated by the Swiss National Supercomputing Centre (CSCS) under project IDs ab014 and a0200. 
RG and JS acknowledge support from StimuLOOP (\#1-007811-002) grant and the Vontobel Foundation. RG acknowledges funding from the German Federal Ministry of Research, Technology and Space under the grant BIFOLD26B. 
PH acknowledges support from the Transatlantic Collaboration Fund (TCF), funded by the Hasso Plattner Foundation (HPF).
\fi

\bibliography{iclr2027_conference}
\bibliographystyle{iclr2027_conference}

\clearpage

\appendix
\crefalias{section}{appendix}
\crefalias{subsection}{subappendix}
\crefalias{subsubsection}{subsubappendix}

\section{Author Contribution}
\label{sec:contributions}

\textbf{Felix Draxler} designed and coordinated the overall project, coauthored the compute grant, and managed computational resources. He headed the pretraining team, contributed to pretraining models and coauthored the scaling laws. He proposed the idea of self-distillation, and contributed to the construction of the ELBO loss derivation.

\textbf{Robin C. Geyer} led external partnerships and set technical direction, coauthored the compute grant, and managed computational resources. He was co-lead and core developer on the pretraining team and coauthored the scaling-law analysis. He moved beyond representing auxiliary variables as scalars by developing a Voronoi-based method that maps them to vectors in the transformer’s embedding space.

\textbf{Farrin Marouf Sofian} led the post-training team, co-designed
the post-training recipe, including algorithms, datasets and benchmarks. Implemented supervised fine-tuning. She also co-developed the VAE framework for PLM as presented in the paper and co-edited and co-revised the final manuscript.

\textbf{Daniel Geyfman} led the baseline/evaluation team and conducted both pretraining and posttraining quality experiments. He also contributed to getting the initial pretrained model and running initial ablations.

\textbf{Justus Will} led the inference team. He proposed, implemented, analyzed, and tested the inference algorithms PLM uses. He was responsible for designing the fast hardware-optimized inference codebase and porting the trained models into it. He advised on several design choices across the project and derived the exact conditions and proofs in \Cref{app:derivations}.   

\textbf{Philipp Hildebrandt} was a core developer on the pretraining team and contributed to developing self-distillation into a practical training recipe, explored and implemented representation alignment to improve training efficiency and built the experimental setup for the scaling-law study.

\textbf{Giosue Migliorini} was a core developer on the pretraining team, contributing to incorporate PLM-specific training, attention masking, and metrics into the Megatron repository. He also contributed to the variational interpretation of PLMs, and to testing novel inference recipes.

\textbf{Prakhar Srivastava} contributed to the design and implementation of post-training methods for PLMs, including the initial formulation and technical documentation of PLM-compatible SFT and DPO, supported by proof-of-concept experiments. He also contributed to the post-training literature review, dataset selection, and evaluation design, and implemented and validated the large-scale DPO training pipeline.

\textbf{Gagan Sai Trinadh Duggaraju Peddiraju} designed and ran initial ablations across various hyperparameters and created an intermediatery ablation report. He also worked on improving inference generation quality using a small autoregressive head.

\textbf{Kushagra Pandey} co-designed the core components of the VAE framework for PLM. He also participated in the discussions around post-training throughout the project while serving mostly in an advisory role.

\textbf{Jorge da Silva Gonçalves} conducted the literature review and baseline exploration. Furthermore, he wrote the related work section, was part of the baseline/evaluation team, and contributed to the initial pretraining implementation. 

\textbf{Elijah Tavares} contributed to the baseline team and to the pretraining effort. He worked on training and performing ablations for masked diffusion baselines. He also performed a literature review and selected comparison metrics. Furthermore, he optimized and improved the robustness of the data processing pipeline.

\textbf{Jan Niklas Groeneveld} implemented the framework for the evaluation and comparison to the baselines, and an interface for the model, advised the baselines team, and participated in discussions of post-training algorithms.

\textbf{Yang Meng} designed a controlled toy dataset to support validation of the core idea and systematic experimentation. He also contributed to the post-training effort through literature review and benchmark design.

\textbf{Julia E. Vogt} coauthored the compute grant, allocated resources, read, edited and approved the manscript.

\textbf{Stephan Mandt} co-led the project with the team leads, with most of the work carried out within his research group at UCI. He provided scientific guidance on the project's direction and scope, allocated resources, and advised team leads and students in biweekly meetings. He coauthored the compute grant, drew connections to deep latent variable models, and co-wrote the paper.

\section{Method Derivation}
\label{app:derivations}

In this and the following subsections, we treat encoder and decoder as arbitrary distributions with the factorizations of the main text,
\begin{align}
    q_\penc(\vx, \vu) &= \prod_{i=1}^L q_\penc(u_i \mid \vx_{\leq i}) \prod_{i=1}^L q_\penc(x_i \mid \vx_{< i}) ,\\
    p_\theta(\vx, \vu) &= \prod_{i=1}^L \underbrace{p_\theta(x_i \mid \vu_{\leq i})}_{> 0} \prod_i \underbrace{p(u_i)}_{>0}.
\end{align}
This is justified because we only consider model architectures with these properties.

\subsection{Evidence Lower Bound}
\label{app:elbo}

We prove \cref{eq:plm-elbo}, i.e., that

\begin{align}
    \label{eq:elbo}
    \Ll(\theta; \vx) \coloneqq \underbrace{\EE_{\vu \sim q_\penc(\vu \mid \vx)}\!\left[ \sum_{i=1}^L -\log p_{\theta}(x_i \mid \vu_{\leq i}) \right]}_{\Ll_\mathrm{dec}(\theta; \vx)} + \sum_{i=1}^L\KL(q_\penc(u_i \mid \vx_{\leq i}) \parallel p(u_i))
\end{align}

is an upper bound to the negative likelihood. To this end, we show

\begin{equation}
    \label{eq:plm-elbo-gap}
    -\log p_\theta(\vx) = \Ll(\theta; \vx) - \KL(q_\penc(\vu \mid \vx) \parallel p_\theta(\vu \mid \vx)),
\end{equation}
where
\begin{equation}
    p_\theta(\vu \mid \vx) = \frac{\prod_{i=1}^L p(u_i) \prod_{i=1}^L p_\theta(x_i \mid \vu_{\leq i})}{p_\theta(\vx)}
\end{equation}

denotes the true posterior over auxiliaries.
The result follows as $\KL$ is non-negative. From our assumptions follows $\operatorname{supp}(q_\penc(\vu \mid \vx)) \subseteq \operatorname{supp}(p_\theta(\vu \mid \vx))$, and since $-\log p_\theta(\vx)$ does not depend on $\vu$, we can take its expectation under $q_\penc(\vu \mid \vx)$ and apply Bayes' rule:
\begin{align}
    -\log p_{\theta}(\vx) 
    =\;& \EE_{\vu \sim q_\penc(\vu \mid \vx)}\!\left[ -\log \frac{\prod_{i=1}^L p(u_i) \prod_{i=1}^L p_{\theta}(x_i \mid \vu_{\leq i})}{p_\theta(\vu \mid \vx)} \right] \\
    =\;& \EE_{\vu \sim q_\penc(\vu \mid \vx)}\!\left[ -\log \left( \prod_{i=1}^L p_{\theta}(x_i \mid \vu_{\leq i}) \cdot \frac{\prod_{i=1}^L p(u_i)}{\prod_{i=1}^L q_\penc(u_i \mid \vx_{\leq i})} \cdot \frac{q_\penc(\vu \mid \vx)}{p_\theta(\vu \mid \vx)} \right) \right] \\
    =\;& \Ll_\mathrm{dec}(\theta; \vx) + \sum_{i=1}^L\KL(q_\penc(u_i \mid \vx_{\leq i}) \parallel p(u_i)) \nonumber\\ & - \KL(q_\penc(\vu \mid \vx) \parallel p_\theta(\vu \mid \vx)),
\end{align}

where the second equality multiplies and divides by $q_\penc(\vu \mid \vx)$ inside the logarithm, the third splits the logarithm and uses linearity of expectation. This shows \cref{eq:plm-elbo-gap}, and \cref{eq:plm-elbo} follows.

\subsection{Tightness of the Bound at the Global Minimum}
\label{app:no-posterior-collapse}
 
During training, we optimize $\Ll(\theta) = \EE_{\vx \sim p_\mathrm{data}}[\Ll(\theta; \vx)]$. Averaging \cref{eq:plm-elbo-gap} gives

\begin{equation}
    \Ll(\theta)
    = \underbrace{\EE_{\vx \sim p_\mathrm{data}}\!\left[-\log p_\theta(\vx)\right]}_{\geq H(p_\mathrm{data})}
    + \underbrace{\EE_{\vx \sim p_\mathrm{data}}\!\left[\KL(q_\penc(\vu \mid \vx) \parallel p_\theta(\vu \mid \vx))\right]}_{\geq 0}
    \geq H(p_\mathrm{data}).
    \label{eq:app-elbo-sandwich}
\end{equation}

The bound is tight at the global minimum, that is attained if and only if $p_\theta = p_\mathrm{data}$ and $q_\penc(\vu \mid \vx) = p_\theta(\vu \mid \vx)$ almost everywhere. Then, the cross-entropy $\EE_{\vx \sim p_\mathrm{data}}[-\log p_\theta(\vx)]$ equals the data entropy $H(p_\mathrm{data})$, and the $\KL$ term vanishes.

\subsection{Encoder Loss vs Autoregressive Loss}
\label{app:encoder-loss}

In the following we will restrict our encoder to be injective, i.e., such that for fixed $\vx_{<i}$ and all $x_i \neq x_i'$ their supports $\Uu_{x_i} := \supp q_\penc(u_i \mid \vx_{<i}, x_i)$ and $\Uu_{x_i'}$ are disjoint.
\Cref{app:injectivity} shows that we can construct such an encoder that still is a global minimizer of \cref{eq:app-elbo-sandwich}.

\begin{theorem}
\label{theo:nll-bound}
Then, for any injective encoder $q_\penc$, for all $x_i$,

\begin{equation}
    \label{eq:nll-bound-2}
    \KL\big(q_\penc(u_i \mid \vx_{\leq i}) \parallel p(u_i)\big) = -\log q_\penc(x_i \mid \vx_{<i})
\end{equation}

if and only if $q_\penc(u_i \mid \vx_{<i}) = p(u_i)$ almost everywhere.
\end{theorem}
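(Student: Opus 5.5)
The plan is to rewrite the per-token KL on the left of \cref{eq:nll-bound-2} using the marginal encoder distribution over auxiliaries,
\begin{equation*}
    q_\penc(u_i \mid \vx_{<i}) = \sum_{v} q_\penc(x_i = v \mid \vx_{<i})\, q_\penc(u_i \mid \vx_{<i}, x_i = v).
\end{equation*}
Injectivity is what makes this work. Because the supports $\Uu_v$ are pairwise disjoint, on $\Uu_{x_i}$ only the summand with $v = x_i$ is nonzero. Hence, for every $x_i$ with $q_\penc(x_i \mid \vx_{<i}) > 0$, and almost every $u_i \in \Uu_{x_i}$,
\begin{equation*}
    q_\penc(u_i \mid \vx_{\leq i}) = \frac{q_\penc(u_i \mid \vx_{<i})}{q_\penc(x_i \mid \vx_{<i})}.
\end{equation*}
Tokens with zero encoder probability have no well-defined conditional, so I read ``for all $x_i$'' as ranging over tokens with positive probability.

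Next I substitute this identity into the KL, taking the expectation over $u_i \sim q_\penc(u_i \mid \vx_{\leq i})$, which lives on $\Uu_{x_i}$. Writing $r(u) = q_\penc(u \mid \vx_{<i})/p(u)$ for the density ratio of the marginal to the prior, I obtain
\begin{equation*}
    \KL\big(q_\penc(u_i \mid \vx_{\leq i}) \parallel p(u_i)\big) = \EE_{u_i \sim q_\penc(u_i \mid \vx_{\leq i})}\big[\log r(u_i)\big] - \log q_\penc(x_i \mid \vx_{<i}).
\end{equation*}
So \cref{eq:nll-bound-2} holds for a given $x_i$ exactly when $\EE_{q_\penc(u_i \mid \vx_{\leq i})}[\log r] = 0$.

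The ``if'' direction is then immediate: if $r = 1$ almost everywhere, the expectation vanishes for every $x_i$.

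For ``only if'', I assume the expectation vanishes for every $x_i$ and average over $x_i$ with weights $q_\penc(x_i \mid \vx_{<i})$. By the mixture decomposition above, this average is the expectation of $\log r$ under the marginal, so
\begin{equation*}
    0 = \EE_{u \sim q_\penc(u \mid \vx_{<i})}[\log r(u)] = \KL\big(q_\penc(u_i \mid \vx_{<i}) \parallel p(u_i)\big).
\end{equation*}
Since KL vanishes only for identical distributions, this gives $q_\penc(u_i \mid \vx_{<i}) = p(u_i)$ almost everywhere.

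The main obstacle is measure-theoretic care rather than algebra. First, the expectation of $\log r$ must be well defined, and absolute continuity $q_\penc(\cdot \mid \vx_{<i}) \ll p$ is needed. I would dispatch the latter by noting that if some conditional $q_\penc(\cdot \mid \vx_{\leq i})$ were not absolutely continuous with respect to $p$, its KL would be $+\infty$ while the right-hand side $-\log q_\penc(x_i \mid \vx_{<i})$ is finite. The identity would then fail for that $x_i$, so the equality hypothesis already forces absolute continuity. Second, the density identity holds only almost everywhere on each $\Uu_{x_i}$, but this suffices because every expectation involved is taken with respect to measures supported there. The same argument covers both the discrete Voronoi auxiliaries and continuous ones by reading densities with respect to the appropriate base measure.
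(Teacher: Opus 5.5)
Your proposal is correct and follows the paper's proof essentially step by step. It uses the disjoint-support identity $q_\penc(u_i \mid \vx_{\leq i}) = q_\penc(u_i \mid \vx_{<i})/q_\penc(x_i \mid \vx_{<i})$ on $\Uu_{x_i}$ and the resulting split of the KL into $-\log q_\penc(x_i \mid \vx_{<i})$ plus an expected log density ratio; for the converse it averages over $x_i$ to get $\KL(q_\penc(u_i \mid \vx_{<i}) \parallel p(u_i)) = 0$. Restricting to tokens with $q_\penc(x_i \mid \vx_{<i})>0$ and handling absolute continuity explicitly are sensible refinements that the paper leaves implicit, since for zero-probability tokens the right-hand side is $+\infty$ and the conditional is not well defined.
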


\begin{proof}
By disjointness, for $u_i \in \Uu_{x_i}$ only one term survives in the aggregate posterior:

\begin{equation}
    q_\penc(u_i \mid \vx_{<i})
    = \sum_{x_i} q_\penc(u_i \mid \vx_{<i}, x_i)\, q_\penc(x_i \mid \vx_{<i})
    = q_\penc(u_i \mid \vx_{\leq i})\, q_\penc(x_i \mid \vx_{<i}).
\end{equation}

For such $u_i$, $q_\penc(x_i \mid u_i, \vx_{< i})=1$ by construction and Bayes' rule yields

\begin{equation}
\label{eq:bayes-inj}
    q_\penc(u_i \mid \vx_{\leq i}) = q_\penc(u_i \mid \vx_{<i}) / q_\penc(x_i \mid \vx_{<i}).
\end{equation}

  Substituting into the KL gives

\begin{equation}
    \label{eq:kl-decomp}
    \KL\big(q_\penc(u_i \mid \vx_{\leq i}) \parallel p(u_i)\big)
    = -\log q_\penc(x_i \mid \vx_{<i})
    + \EE_{q_\penc(u_i \mid \vx_{\leq i})}\!\left[\log \frac{q_\penc(u_i \mid \vx_{<i})}{p(u_i)}\right].
\end{equation}

($\Leftarrow$) If $q_\penc(u_i \mid \vx_{<i}) = p(u_i)$ a.e., the last term of \cref{eq:kl-decomp} vanishes.

($\Rightarrow$) If the equality holds for all $x_i$, the last term of \cref{eq:kl-decomp} is zero for every $x_i$.
Averaging over $x_i \sim q_\penc(x_i \mid \vx_{<i})$ and using
$\sum_{x_i} q_\penc(x_i \mid \vx_{<i})\, q_\penc(u_i \mid \vx_{\leq i}) = q_\penc(u_i \mid \vx_{<i})$ yields
\begin{equation*}
    0 = \EE_{q_\penc(u_i \mid \vx_{<i})}\!\left[\log \frac{q_\penc(u_i \mid \vx_{<i})}{p(u_i)}\right]
    = \KL\big(q_\penc(u_i \mid \vx_{<i}) \parallel p(u_i)\big),
\end{equation*}
so $q_\penc(u_i \mid \vx_{<i}) = p(u_i)$ a.e.
\end{proof}

The condition $q_\penc(u_i \mid \vx_{<i}) = p(u_i)$ enforces that $u_i$ only contains the additional information about $x_i$. We can ensure this by construction:

\begin{proposition}
\label{cor:shape-match}
For any injective encoder $q_\penc(u_i \mid \vx_{<i}) = p(u_i)$ if and only if
\begin{equation*}
    q_\penc(u_i \mid \vx_{\leq i})
    = \frac{p(u_i)\,\1[u_i \in \Uu_{x_i}]}{q_\penc(x_i \mid \vx_{<i})},
\end{equation*}
i.e., the ratio $q_\penc(u_i \mid \vx_{\leq i}) / p(u_i)$ is constant on $\Uu_{x_i}$
(shape matching) and $p(\Uu_{x_i}) = q_\penc(x_i \mid \vx_{<i})$ (mass matching).
\end{proposition}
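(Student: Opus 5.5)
The plan is to use the disjoint-support identity established in the proof of \cref{theo:nll-bound}, namely \cref{eq:bayes-inj}: for an injective encoder and every $u_i \in \Uu_{x_i}$,
\[
q_\penc(u_i \mid \vx_{\leq i}) = \frac{q_\penc(u_i \mid \vx_{<i})}{q_\penc(x_i \mid \vx_{<i})},
\]
while $q_\penc(u_i \mid \vx_{\leq i}) = 0$ for $u_i \notin \Uu_{x_i}$, since $\Uu_{x_i}$ is its support. Both directions are then essentially substitutions into this identity, together with the fact that the cells $\Uu_{x_i}$ partition the support of the aggregate posterior.

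($\Rightarrow$) Assume $q_\penc(u_i \mid \vx_{<i}) = p(u_i)$ a.e. Substituting into \cref{eq:bayes-inj} gives $q_\penc(u_i \mid \vx_{\leq i}) = p(u_i)/q_\penc(x_i \mid \vx_{<i})$ on $\Uu_{x_i}$ and zero outside, which is the displayed formula. Integrating over $u_i$ then gives $1 = p(\Uu_{x_i}) / q_\penc(x_i \mid \vx_{<i})$, which is mass matching. The ratio $q_\penc(u_i\mid\vx_{\leq i})/p(u_i)$ is visibly the constant $1/q_\penc(x_i \mid \vx_{<i})$ on $\Uu_{x_i}$, which is shape matching.

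($\Leftarrow$) Assume the displayed formula holds for every $x_i$ (with $q_\penc(x_i\mid\vx_{<i})>0$). I would compute the aggregate posterior directly:
\[
q_\penc(u_i \mid \vx_{<i}) = \sum_{x_i} q_\penc(u_i\mid\vx_{\leq i})\,q_\penc(x_i\mid\vx_{<i}) = p(u_i)\sum_{x_i}\1[u_i\in\Uu_{x_i}].
\]
By disjointness the sum is at most $1$, so $q_\penc(u_i\mid\vx_{<i}) \le p(u_i)$. Both sides integrate to one, so equality must hold a.e. Equivalently, the cells cover $p$-almost all of $\Uu$, which also follows from $\sum_{x_i} p(\Uu_{x_i}) = \sum_{x_i} q_\penc(x_i\mid\vx_{<i}) = 1$. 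For the ``i.e.'' reformulation I would check that shape and mass matching together imply the formula. Shape matching gives $q_\penc(u_i\mid\vx_{\leq i}) = c\,p(u_i)\1[u_i\in\Uu_{x_i}]$, and normalization forces $c = 1/p(\Uu_{x_i})$, which mass matching identifies with $1/q_\penc(x_i\mid\vx_{<i})$.

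The main obstacle is bookkeeping rather than a hard step. Tokens with $q_\penc(x_i\mid\vx_{<i})=0$ must be excluded or handled by convention, and all statements must be read up to $p$-null sets, particularly the covering claim in ($\Leftarrow$). That claim is exactly where disjointness combined with the total-mass argument is needed, so I would state it carefully.
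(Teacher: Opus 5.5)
Your proposal is correct and follows the paper's proof. For ($\Rightarrow$) you substitute $q_\penc(u_i \mid \vx_{<i}) = p(u_i)$ into \cref{eq:bayes-inj}. For ($\Leftarrow$) you marginalize the displayed formula over $x_i$ and use disjointness plus total mass one; the paper first integrates to get $\sum_{x_i} p(\Uu_{x_i}) = 1$, whereas your bound $q_\penc(u_i \mid \vx_{<i}) \le p(u_i)$ with equal total mass makes the same covering step explicit, and your checks of the shape/mass-matching reformulation and of zero-probability tokens fill in details the paper leaves implicit.
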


\begin{proof}

($\Rightarrow$) By disjointness on $\Uu_{x_i}$, and substituting $q_\penc(u_i \mid \vx_{<i}) = p(u_i)$ into \cref{eq:bayes-inj}.

($\Leftarrow$) Integrating over $\Uu_{x_i}$ gives $p(\Uu_{x_i}) = q_\penc(x_i \mid \vx_{<i})$,
so $\sum_{x_i} p(\Uu_{x_i}) = 1$. Hence,
\begin{equation*}
    q_\penc(u_i \mid \vx_{<i})
    = \sum_{x_i} q_\penc(x_i \mid \vx_{<i})\, \frac{p(u_i)\,\1[u_i \in \Uu_{x_i}]}{q_\penc(x_i \mid \vx_{<i})}
    = p(u_i). \qedhere
\end{equation*}
\end{proof}

\begin{corollary}[Voronoi auxiliaries]
    \label{cor:voronoi-nll}
    For the ideal Voronoi construction with disjoint token cells and exact mass matching as in \cref{eq:plm-auxiliary-association-properties}, the encoder fulfills the equality in \cref{eq:nll-bound-2}.
\end{corollary}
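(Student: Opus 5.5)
The plan is to reduce the corollary to Proposition~\ref{cor:shape-match} followed by Theorem~\ref{theo:nll-bound}. The work is only in checking that the ideal Voronoi construction meets their hypotheses.

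First, I would establish injectivity. In the ideal construction the cell of token $v$ is $\Uu_v = \{u : f^\text{Voronoi}(u; Q_i) = v\}$, with ties in the $\argmax$ of \cref{eq:aux-scores} broken by a fixed deterministic rule. Then $f^\text{Voronoi}(\cdot;Q_i)$ is a function on $\{1,\dots,\Naux\}$, so the cells are disjoint and cover the auxiliary space. The encoder \cref{eq:shape-matching} is supported on $\Uu_{x_i}$, so $\supp q_\penc(u_i\mid\vx_{\leq i})\subseteq \Uu_{x_i}$. Hence the encoder is injective in the sense required before Theorem~\ref{theo:nll-bound}, which is property (a) of \cref{eq:plm-auxiliary-association-properties}.

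Second, I would verify the characterization in Proposition~\ref{cor:shape-match}. Normalizing \cref{eq:shape-matching} gives
\begin{equation*}
q_\penc(u_i \mid \vx_{\leq i}) = \frac{p(u_i)\,\1[u_i\in\Uu_{x_i}]}{p(\Uu_{x_i})}.
\end{equation*}
Exact mass matching, property (b), states $p(\Uu_{x_i}) = q_\penc(x_i\mid\vx_{<i})$. This is precisely the displayed form in Proposition~\ref{cor:shape-match}: shape matching holds by construction and mass matching by assumption. The ($\Leftarrow$) direction of that proposition then yields $q_\penc(u_i\mid\vx_{<i}) = p(u_i)$.

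Third, I would apply the ($\Leftarrow$) direction of Theorem~\ref{theo:nll-bound}. This gives \cref{eq:nll-bound-2} for every $x_i$ with $q_\penc(x_i\mid\vx_{<i})>0$. A direct sanity check agrees: the KL is $\sum_{u\in\Uu_{x_i}} \frac{p(u)}{p(\Uu_{x_i})}\log\frac{1}{p(\Uu_{x_i})} = -\log q_\penc(x_i\mid\vx_{<i})$.

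The main obstacle is making ``ideal'' precise. With $\Naux$ finite and the uniform prior $p(u)=1/\Naux$, exact mass matching forces every $q_\penc(x_i\mid\vx_{<i})$ to be a multiple of $1/\Naux$. So the corollary should be read as a statement under the assumption that bias weights $w_v(Q_i)$ exist achieving exact mass matching. In practice this holds only up to the rounding and top-128 masking discussed in \cref{sec:auxiliaries}. A second point needing care is tokens of zero probability: their cells are empty, and I would restrict the claim to tokens in the support, where both sides of \cref{eq:nll-bound-2} are finite.
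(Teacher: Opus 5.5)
Your proposal is correct and follows the paper's proof. The paper likewise checks three things: injectivity from the disjoint Voronoi cells, shape matching (the encoder is the prior restricted to the cell, i.e.\ $\Unif(\Uu_{x_i})$ under the uniform prior), and mass matching by assumption, and then invokes Proposition~\ref{cor:shape-match} and the ($\Leftarrow$) direction of Theorem~\ref{theo:nll-bound}. Your direct KL computation shows that the equality already follows from shape and mass matching alone, and your caveats (exact mass matching forcing multiples of $1/\Naux$, excluding zero-probability tokens) are appropriate refinements of what ``ideal'' means.
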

\begin{proof} We directly apply \Cref{theo:nll-bound} and \Cref{cor:shape-match}.

    Injectivity: By construction, the Voronoi cells $\Uu_v$ of \cref{eq:aux-scores} are disjoint for all $v = x_i$.
    
    Shape matching: The prior $p(u) = \Unif(\Uu)$, is uniform over the discrete auxiliary space. As we draw embeddings randomly over their cell, we have $q_\penc(u_i \mid \vx_{\leq i}) = \Unif(\Uu_{x_i})$, and constant ratio.

    Mass matching: By assumption, $p(\Uu_{x_i}) = q_\penc(x_i \mid \vx_{<i})$ exactly.
\end{proof}

\subsection{Encoder Injectivity at Optimality}
\label{app:optimal-decoder}
\label{app:injectivity}

In the following, we relax our assumptions to $p_\theta(x_i \mid \vu_{\leq i}) \geq 0$, i.e., the decoder is allowed to vanish. This is not a restriction, as any such decoder can be approximated arbitrarily close by a decoder with strict positivity.

For a fixed injective encoder, $\vu_{\leq i}$ determines $\vx_{\leq i}$, say $x_i = g_i(\vu_{\leq i})$. Then, $\Ll_\mathrm{dec}$ is a sum of non-negative terms $-\log p_\theta(x_i \mid \vu_{\leq i})$ that vanishes if and only if $p_\theta(x_i \mid \vu_{\leq i}) = \1[x_i = g_i(\vu_{\leq i})]$: the optimal decoder is the deterministic inverse of the encoder. Inserting it into \cref{eq:plm-generative-model},
\begin{equation}
    p_\theta(\vx) = \int \prod_{i=1}^L p(u_i) \, \1[u_i \in \Uu_{x_i}] \, d\vu = \prod_{i=1}^L p(\Uu_{x_i}),
    \qquad\text{so}\qquad
    p_\theta(x_i \mid \vx_{<i}) = p(\Uu_{x_i})
    \label{eq:app-dec-token-prob}
\end{equation}
when the cells cover the auxiliary space $\Uu$: the probability of a token is the prior mass of its cell.

If the injective encoder additionally satisfies $p(\Uu_{x_i}) = p_\mathrm{data}(x_i \mid \vx_{<i})$ and $q_\penc(u_i \mid \vx_{\leq i}) = p(u_i)\, \1[u_i \in \Uu_{x_i}] / p(\Uu_{x_i})$, we have  $p_\theta = p_\mathrm{data}$ and $q_\penc(\vu \mid \vx) = p_\theta(\vu \mid \vx)$ almost everywhere, which yields global optimality as in \Cref{app:no-posterior-collapse}, showing that injectivity is not a constraint for optimality.

If, in addition, the token model matches the data, $q_\penc(x_i \mid \vx_{<i}) = p_\mathrm{data}(x_i \mid \vx_{<i})$, as it does at the global minimum of \Cref{eq:loss}, the first condition gives $p(\Uu_{x_i}) = q_\penc(x_i \mid \vx_{<i})$ showing that there is a global minimizer whose cells are sized by the token model.

\section{Fitting the Voronoi Weights and Alternative Cell Geometries}
\label{sec:aux-weight-fitting}

At position $i$, let $T_i$ be the $M$ most likely tokens under the encoder distribution.
Their renormalized probabilities are
\begin{equation}
    \tilde q_{i,v} = \frac{q_\penc(x_i=v \mid \vx_{<i})}
    {\sum_{v'\in T_i}q_\penc(x_i=v' \mid \vx_{<i})}, \qquad v\in T_i.
    \label{eq:aux-target-probabilities}
\end{equation}
We convert these probabilities into integer cell capacities $n_{i,v}$ by largest-remainder
rounding: each token first receives $\lfloor \Naux\tilde q_{i,v}\rfloor$ auxiliaries, and the
remaining auxiliaries are allocated in descending order of the fractional remainders.
The capacities therefore satisfy $\sum_{v\in T_i}n_{i,v}=\Naux$.
Relative to the renormalized top-$M$ probabilities, largest-remainder rounding gives
\begin{equation}
    \left|\frac{n_{i,v}}{\Naux}-\tilde q_{i,v}\right|
    <\frac{1}{\Naux}, \qquad v\in T_i.
\end{equation}

\paragraph{Laguerre cells.}
Let $d$ be the model's embedding dimension. We use the unnormalized output-layer rows
$\ve_v\in\RR^d$ as cell anchors and auxiliary embeddings $\ve_u$ sampled once uniformly from
$\SS^{d-1}$. The score
$s_{i,v}(u)=\ve_v\cdot\ve_u+w_{i,v}$ defines the cells $\Uu_{i,v}$ through
\cref{eq:aux-scores}. This gives a Laguerre diagram with sites $\ve_v$ and power weights
$\|\ve_v\|_2^2+2w_{i,v}$, restricted to the auxiliary sphere:
\begin{equation}
    \|\ve_u-\ve_v\|_2^2-\bigl(\|\ve_v\|_2^2+2w_{i,v}\bigr)
    =1-2s_{i,v}(u).
    \label{eq:aux-power-distance}
\end{equation}
Thus maximizing the score is equivalent to minimizing the weighted squared distance.
The offsets $w_{i,v}$ control the cell sizes while retaining the geometry
of the language-model head.

\paragraph{Weight fitting.}
The offsets are dual variables of the balanced assignment problem that maximizes the total
score subject to the capacities $n_{i,v}$. We fit them using annealed Sinkhorn
updates. At temperature $\tau>0$, the soft assignments and
corresponding cell loads are
\begin{equation}
    \pi^{\tau}_{i,v,u}
    =\frac{\exp\bigl(s_{i,v}(u)/\tau\bigr)}
    {\sum_{v'\in T_i}\exp\bigl(s_{i,v'}(u)/\tau\bigr)},
    \qquad \ell^{\tau}_{i,v}=\sum_{u\in\Uu}\pi^{\tau}_{i,v,u}.
    \label{eq:aux-soft}
\end{equation}
For $n_{i,v}>0$, the log-domain update is
\begin{equation}
    w_{i,v}\leftarrow w_{i,v}
    +\tau\bigl(\log n_{i,v}-\log\ell^{\tau}_{i,v}\bigr).
    \label{eq:aux-update}
\end{equation}
This increases the scores of underfilled cells and decreases those of overfilled cells.
Candidates with zero capacity receive a large negative offset.
We initialize the offsets from the capacity shares and the scale of the geometric scores,
then apply clipped, over-relaxed updates as $\tau$ decreases. To reduce the fitting cost,
we evaluate the soft assignments over a small shortlist of candidates for each auxiliary
and refresh these shortlists between rounds. The final assignment takes the highest score
over all $M$ candidates, with consistent tie-breaking. We use a fixed number of updates,
so the resulting cell counts approximate the prescribed capacities; the fitting settings
and measured error are reported in \cref{app:pretraining-details}.

\section{Pretraining Details}
\label{app:pretraining-details}

This section lists the settings of the pretraining runs in \cref{sec:scaling-laws}.
PLMs and autoregressive (AR) baselines of the same size share every setting except the loss.
The AR baselines switch off the decoder and alignment losses.

\paragraph{Data.}
We pretrain on ClimbMix \cite{diao2025climb}, tokenized with the GPT-2 byte-pair encoding \cite{radford2019language}.
Its $V = 50{,}257$ tokens are padded to $50{,}304$ embedding rows, a multiple of 128.
The tokenized corpus holds 357.4B tokens in 553M documents.
The largest run, the 3B PLM, reads 300B tokens (240B constant learning rate and 60B during learning rate decay), or 84\% of the corpus.
We hold out no split and report training losses throughout.

Training sequences are 4096 tokens long and packed across document boundaries, with an end-of-document token between documents.
The AR loss covers every position of a packed sequence.
A decoder block, in contrast, ends at the first end-of-document token among its targets, so it never reconstructs tokens from two documents.

\paragraph{Architecture.}
All models are decoder-only transformers.
They use pre-normalization with RMSNorm \cite{zhang2019root}, SwiGLU MLPs \cite{shazeer2020glu}, and grouped-query attention \cite{ainslie2023gqa}.
In grouped-query attention, four query heads share each key--value head, and every head has dimension 64.
Rotary embeddings \cite{su2024roformer} with base $500{,}000$ encode the positions.
The linear layers of the transformer have no biases, the input and output embeddings are untied, and we use no dropout.
All weights are initialized from $\mathcal{N}(0, 0.02^2)$.
\Cref{tab:pretraining-sizes} lists the five model sizes.

The parameter count $N$ of the scaling laws counts only the transformer body.
It excludes the input and output embeddings and the few parameters the PLM adds: the alignment map $g_\psi$ and the mask auxiliary described below.

\begin{table}[t]
    \centering
    \caption{\textbf{Model sizes and per-run settings.} $N$ counts non-embedding parameters. ``Total'' adds the untied input and output embeddings. The planned token budget $D_\mathrm{max}$ sets the learning rate $\eta$ and the global batch size (GBS, in sequences of 4096 tokens) through \cref{eq:step-law}. $^\dagger$The 3B constant-learning-rate run stopped at 240B tokens; its $\eta$ and GBS are those for the planned 300B.}
    \label{tab:pretraining-sizes}
    \small
    \setlength{\tabcolsep}{4pt}
    \begin{tabular}{lrrrrrrrrrrr}
        \toprule
        Model & Layers & $d$ & FFN & Heads & KV & $N$ & Total & $D_\mathrm{max}$ & GBS & $\eta$ & Nodes \\
        \midrule
        80M  & 12 &  512 & 1408 &  8 &  2 &  33.8M &  85.3M &   3B &  32 & $6.22\cdot10^{-3}$ & 1 \\
        150M & 12 &  768 & 2048 & 12 &  3 &  74.3M & 151.6M &   6B &  64 & $4.39\cdot10^{-3}$ & 2 \\
        350M & 24 & 1024 & 2560 & 16 &  4 & 251.7M & 354.7M &  20B & 128 & $2.66\cdot10^{-3}$ & 4 \\
        1.3B & 24 & 2048 & 5632 & 32 &  8 &  1.08B &  1.29B &  80B & 256 & $1.44\cdot10^{-3}$ & 8 \\
        3B   & 28 & 3072 & 8192 & 48 & 12 &  2.77B &  3.08B & 300B$^\dagger$ & 512 & $1.10\cdot10^{-3}$ & 32 \\
        \bottomrule
    \end{tabular}
\end{table}

\paragraph{Optimizer.}
We use AdamW \cite{loshchilov2017decoupled} with $\beta_1 = 0.9$, $\beta_2 = 0.95$, $\epsilon = 10^{-8}$, and weight decay $0.1$.
Gradients are clipped to a global norm of 1.
We train in bfloat16 mixed precision and keep gradient reductions, master weights, and optimizer states in fp32.

\paragraph{Learning rate and batch size.}
We set the peak learning rate and the global batch size from $N$ and the token budget $D_\mathrm{max}$ of the constant-learning-rate run with Step Law \cite{li2025predictable}, a fitted rule for these two hyperparameters:
\begin{equation}
    \eta(N, D_\mathrm{max}) = 1.79\, N^{-0.713} D_\mathrm{max}^{0.307}, \qquad B(D_\mathrm{max}) = 0.58\, D_\mathrm{max}^{0.571} \text{ tokens}.
    \label{eq:step-law}
\end{equation}
We round $B$ to the nearest power of two in sequences.
The micro-batch size trades memory for throughput and does not change the optimization.

For the 3B model, both values come from the planned budget $D_\mathrm{max} = 300$B, but we stopped its constant-learning-rate run at 240B tokens (see below).
At 240B, \cref{eq:step-law} gives $\eta = 1.03\cdot10^{-3}$, 7\% below the value we used, and the same GBS of 512 after rounding.
The 3B hyperparameters are therefore close to, but not exactly, those Step Law prescribes for the run we trained.

\paragraph{Learning rate schedule.}
Each configuration trains one long run with a warmup--stable--decay schedule \cite{hagele2024scaling}.
The learning rate warms up linearly over 500 iterations and then stays at $\eta$ until the run reaches its budget $D_\mathrm{max}$.
We keep about ten evenly spaced checkpoints along this run.

From each checkpoint, an annealing branch decays the learning rate on the unseen part of the data stream.
The branch from iteration $I$ trains for another $T = I/4$ iterations.
It decays the learning rate from $\eta$ to $\eta_\mathrm{min} = 10^{-5}$ with the square-root cooldown of \citet{hagele2024scaling}:
\begin{equation}
    \eta_{I_\mathrm{cool}} = \eta_\mathrm{min} + (\eta - \eta_\mathrm{min}) \bigl(1 - \sqrt{I_\mathrm{cool} / T}\bigr), \qquad I_\mathrm{cool} = 0, \dots, T.
\end{equation}
The resulting model has trained for $5I/4$ iterations in total, and its token count is the $D$ it contributes to the fits of \cref{app:scaling-law-fits}.

One constant-learning-rate run per configuration therefore yields about ten fully decayed models with different $D$.
From 80M to 1.3B, the last branch starts where the constant-learning-rate run ends, so it ends at $5/4$ of the budget; the 1.3B models, for example, end at iteration $95{,}366$ (100B tokens).
The 3B constant-learning-rate run stopped earlier, at iteration $114{,}436$ (240B tokens, $4/5$ of its budget $D_\mathrm{max} = 300$B).
Its last branch therefore ends exactly at the budget, at iteration $143{,}045$ (300B tokens).
The post-trained models of \cref{tab:post-training} start from these final branches.

\paragraph{Blocks and loss ratio.}
All PLMs use block size $K = 16$.
The stride $s$ of \cref{eq:plm-final-objective} sets $\beta = s / K$, the number of encoder positions per decoder position, which weights the encoder loss against the decoder loss like the $\beta$ of a $\beta$-VAE \cite{higgins2017betavae}.
PLM-$\beta$=0.5, PLM-$\beta$=1, and PLM-$\beta$=2 use $s = 8$, $16$, and $32$.

The block grid shifts at every step, so every position starts a block equally often over training.
Each step draws a phase $o$ uniformly from $\{0, \dots, s-1\}$, independently on each data-parallel rank.
Block $z$ then starts at $i_z = (z-1)s + 1 - o$.
Blocks that extend past either end of the sequence are truncated.

\paragraph{Loss normalization.}
Every scored position enters the loss with unit weight.
We divide \cref{eq:plm-final-objective} by the number of scored positions: every encoder position of the sequence, plus about $1/\beta$ decoder positions per encoder position.
The ratio $\beta$ of encoder to decoder positions is therefore also the weight of the encoder loss relative to the decoder loss.
We add the alignment loss \cref{eq:align-loss}, weighted by $\lambda_\mathrm{align}$, to the numerator before the division, so it is normalized in the same way as the two cross-entropy terms.

\paragraph{Training step.}
Each step runs the encoder and the decoder as two FlexAttention \cite{dong2024flex} passes.
The encoder pass caches its keys and values, and the decoder pass attends to this cache instead of recomputing the clean prefix.
The cache stays in the computation graph, so $\Ll_\mathrm{dec}$ also backpropagates into the encoder pass.

\paragraph{Auxiliaries.}
We use $\Naux = 512$ auxiliaries and $M = 128$ candidate tokens.
The cell capacities $n_{i,v}$ of \cref{sec:aux-weight-fitting} are obtained by largest-remainder rounding, so they sum to exactly $\Naux$.
The offset fit of \cref{sec:aux-weight-fitting} runs 2 rounds of 12 sweeps over a shortlist of 8 candidates per auxiliary.
During training, the mean assignment deviation was below 0.3\%: this is the total variation between the fitted cell masses and the rounded target masses $n_{i,v}/\Naux$, averaged over positions.
This measures fitting error separately from top-$M$ truncation and capacity rounding.
Some observed tokens have no auxiliary, because they fall outside the top $M$ or receive a quota of zero.
At these positions, the decoder reads a learned mask embedding instead.
These positions retain their reconstruction loss.

\paragraph{Representation alignment.}
We use $\lambda_\mathrm{align} = 5$, $\ell_\mathrm{enc} = \lfloor \text{Layers}/2 \rfloor$, and $\ell_\mathrm{dec} = \lfloor \text{Layers}/3 \rfloor$.
This gives layers $(\ell_\mathrm{enc}, \ell_\mathrm{dec}) = (6, 4)$ at 80M and 150M, $(12, 8)$ at 350M and 1.3B, and $(14, 9)$ at 3B.
Both states are read from the residual stream between the attention and MLP blocks of their layer.
The map $g_\psi$ is a linear layer with bias.

\paragraph{Infrastructure.}
We implement PLMs in PyTorch \cite{paszke2019pytorch} on top of Megatron-LM \cite{shoeybi2019megatron} and NVIDIA Transformer Engine.
All models train on NVIDIA GH200 nodes with four GPUs each, using the node counts in \cref{tab:pretraining-sizes}.
Training uses only data parallelism, without tensor, pipeline, or context parallelism, and shards optimizer states across data-parallel ranks.

\paragraph{Compute.}
\Cref{tab:pretraining-gpu-hours} lists the GPU-hours of the final model of each size and objective.
Each figure sums the wall-clock time of every job that trained the model, times the GPUs of that job.
It covers the constant-learning-rate run up to its last branch point plus the final annealing branch.
We leave out the 80M and 150M models, whose wall-clock time is dominated by stalls in which a step waits on data loading or the file system, rather than by computation.

PLMs cost more than AR baselines because each step processes $1 + 1/\beta$ times the positions of an AR step: $1.5\times$ for PLM-$\beta$=2, $2\times$ for PLM-$\beta$=1, and $3\times$ for PLM-$\beta$=0.5.
Relative to the AR baseline of the same size, PLM-$\beta$=2 costs $1.89\times$ at 1.3B, PLM-$\beta$=1 costs $2.48\times$, and PLM-$\beta$=0.5 costs $3.55\times$, due to additional overheads caused by Voronoi auxiliaries.
Between PLMs, going from PLM-$\beta$=2 to PLM-$\beta$=1 multiplies the positions by $1.33$ and the GPU-hours by $1.17$--$1.31$.
Going on to PLM-$\beta$=0.5 multiplies the positions by $1.5$ and the GPU-hours by $1.42$--$1.43$.

\begin{table}[t]
    \centering
    \caption{\textbf{GPU-hours of the final pretrained models}, including the final annealing branch. The 3B model was trained only as PLM-$\beta$=1; its constant-learning-rate run moved from 64 to 128 GPUs after 31 hours.}
    \label{tab:pretraining-gpu-hours}
    \small
    \begin{tabular}{lrrrrr}
        \toprule
        Model & GPUs & AR & PLM-$\beta$=2 ($s = 32$) & PLM-$\beta$=1 ($s = 16$) & PLM-$\beta$=0.5 ($s = 8$) \\
        \midrule
        350M &  16 &  98 &    172 &    201 &    285 \\
        1.3B &  32 & 761 & 1{,}436 & 1{,}884 & 2{,}701 \\
        3B   & 64--128 &  -- &     -- & 11{,}872 &     -- \\
        \bottomrule
    \end{tabular}
\end{table}

\section{Ablation on Design Choices}
\label{app:ablation}

We ablate the design choices of \cref{sec:scalable-implementation} on a small scale. The models use the same data and schedule as the scaling runs of \cref{sec:scaling-laws}. Unless stated otherwise, we use the 150M model (see \cref{app:pretraining-details}).

We report three metrics, each averaged over the last 200 training steps of a run. The \emph{AR NLL} is the loss of the autoregressive encoder, $-\log q_\penc(x_i \mid \vx_{<i})$, on the clean tokens. It measures model quality and is directly comparable to the loss of an AR model. %
The \emph{reconstructed tokens} $R$ count how many tokens the decoder predicts correctly per block (see \cref{app:formal-reconstructed-tokens} for a formal definition). The encoder draws auxiliaries for the true continuation of a block, and $R$ is the mean length of the block prefix that the decoder then reconstructs exactly by argmax. The first position of a block always counts as reconstructed, so an AR model has $R = 1$.

Every data point comes from a single run with a single seed. As a reference for the noise level, extending a finished learning rate decay training by 20\% more steps at a lower learning rate moves the AR NLL by at most $0.008$ nats and $R$ by at most $0.004$.

\subsection{Auxiliary Choice}

\begin{table}[t]
\centering
\small
\begin{tabular}{llcc}
\toprule
Auxiliaries & Alignment & AR NLL & $R$ \\
\midrule
\multicolumn{2}{l}{AR model} & 2.788 & 1 \\
\midrule
None & no & 2.821 & 1.443 \\
None & yes & 2.807 & 1.464 \\
Quantile & no & 2.849 & 1.705 \\
Quantile & yes & 2.837 & 1.739 \\
Voronoi & no & 2.844 & 1.785 \\
Voronoi & yes & 2.829 & 1.850 \\
\bottomrule
\end{tabular}
\caption{\textbf{Building the parallel decoder up from an AR model.} 150M models at stride $s = 32$ ($\beta = 2$), trained for 7.5B tokens including the learning rate decay. The last row is our recipe. The AR model uses the same schedule. ``None'' uses a constant auxiliary; ``Quantile'' uses quantiles as in PTP; ``Voronoi'' uses our geometry-aware auxiliaries. ``Alignment'' denotes the representation-alignment loss. AR NLL is measured in nats; $R$ is the mean reconstructed prefix length.}
\label{tab:ablation-aux}
\end{table}

We show in \cref{tab:ablation-aux} how the choice of auxiliaries changes the model performance.

We start from a model that uses auxiliary positions which carry no information. Every auxiliary position holds the same single token, so the decoder must guess the continuation from the clean prefix alone, as a multi-token prediction head does.

Quantile auxiliaries follow \citet{draxler2026parallel} and are the next step. They cut the unit interval into 512 buckets, and each bucket is one auxiliary. We sort the vocabulary by token index and give each token the interval of quantiles it covers under $q_\penc(x_i \mid \vx_{<i})$. A token then owns the buckets that fall into its interval. Quantile auxiliaries without alignment are the closest variant to the PTP recipe of \citet{draxler2026parallel}. Quantile auxiliaries raise $R$ from $1.443$ without (meaningful) auxiliaries to $1.705$.

Voronoi auxiliaries further raise $R$ over quantile auxiliaries, from $1.705$ to $1.785$ without representation alignment.

\subsection{Representation Alignment}
\label{app:ablation:repa}

We show how representation alignment changes the performance of the parallel decoder with all three auxiliary settings in \cref{tab:ablation-aux}.

Representation alignment improves the final scores in every setting, raising the final number of reconstructed tokens from $1.785$ to $1.850$ and lowering the final AR NLL from $2.844$ to $2.829$.

Without alignment, the model needs $1.7$ to $2.7$ times the training steps for the same number of reconstructed tokens. \Cref{fig:ablation-repa} compares our recipe with and without the alignment loss of \cref{eq:align-loss} during the constant learning rate phase of 6B tokens. The aligned model reaches the $R$ that the unaligned model has at the end of this phase after 37\% of the steps at stride 16 and after 38\% at stride 32. The saving is smaller earlier in training. To reach the $R$ that the unaligned model has at 1B tokens, the aligned model needs 49\% of the steps at stride 16 and 58\% at stride 32. %

Alignment speeds up the AR NLL less than it speeds up $R$. To match the final AR NLL of the unaligned model, the aligned model needs 53\% to 59\% of the steps at constant learning rate. After learning rate decay at stride 32, it needs 89\% of the training tokens.

\begin{figure}[t]
    \centering
    \includegraphics[width=0.8\linewidth]{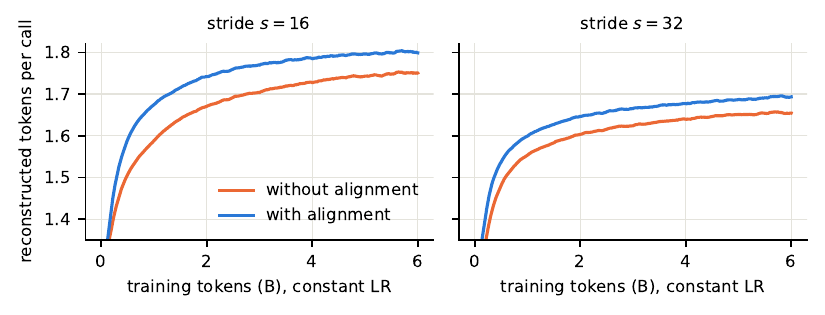}
    \caption{\textbf{Alignment reaches the same number of reconstructed tokens in fewer steps.} Reconstructed tokens per model call $R$ of the 150M PLM (block size $K = 16$, Voronoi auxiliaries) with alignment (blue) and without (orange). Each curve is one training run during its constant learning rate phase of 6B tokens, before learning rate decay. \textit{Left:} stride $s = 16$ ($\beta = 1$). \textit{Right:} stride $s = 32$ ($\beta = 2$).}
    \label{fig:ablation-repa}
\end{figure}

\subsection{Range of Strides}
\label{app:ablation:stride}

Smaller strides train more auxiliary positions per clean token. The stride $s$ sets this number to $K / s$, so a training step processes $1 + K / s$ times the tokens of an AR step. The main text uses $s \in \{8, 16, 32\}$, and \cref{tab:ablation-stride} extends the sweep to $s \in \{2, 4, 64\}$ with 80M models.

Strides below 8 buy few reconstructed tokens at a large cost in quality and compute. Going from stride 8 to stride 2 triples the tokens per step and adds only $0.11$ reconstructed tokens. It also worsens the AR NLL by $0.24$ nats, more than the whole gap between stride 8 and the AR model.

Stride 64 leaves little room for a speedup. Its AR NLL is within $0.02$ nats of the AR model, but it reconstructs only $0.6$ extra tokens per call.

\begin{table}[ht]
\centering
\small
\begin{tabular}{rrccc}
\toprule
Stride $s$ & $\beta$ & Tokens per step & AR NLL & $R$ \\
\midrule
AR & -- & $1\times$ & 3.023 & 1 \\
64 & 4 & $1.25\times$ & 3.041 & 1.598 \\
32 & 2 & $1.5\times$ & 3.065 & 1.719 \\
16 & 1 & $2\times$ & 3.110 & 1.844 \\
8 & 0.5 & $3\times$ & 3.174 & 1.936 \\
4 & 0.25 & $5\times$ & 3.266 & 1.987 \\
2 & 0.125 & $9\times$ & 3.410 & 2.046 \\
\bottomrule
\end{tabular}
\caption{\textbf{Reconstructed tokens saturate at small strides while quality keeps degrading.} 80M models trained for 3.75B tokens including the learning rate decay. ``Tokens per step'' is the number of positions processed per training step relative to an AR model, $1 + K/s$ at $K = 16$.}
\label{tab:ablation-stride}
\end{table}

\subsection{Training from Scratch versus Converting an AR Model}
\label{app:ablation:ar-to-plm}

Converting a trained AR model is a cheaper alternative to training a PLM from scratch. We train an AR model and switch to the PLM objective (\cref{eq:plm-final-objective} with alignment) only for the learning rate decay, the final 20\% of training. The conversion thus saves the decoder compute of the first 80\% of training. We branch the conversions off the AR runs at the same steps and with the same learning rate decay as the PLM runs. Both therefore see the same data and compute during the decay.

The converted model reconstructs fewer tokens than the PLM trained from scratch (\cref{tab:ablation-ar-to-plm}). At every size and stride, it makes $0.15$ to $0.28$ fewer reconstructed tokens per call. %

The conversion does not improve model quality either. Its AR NLL is worse by $0.01$ to $0.025$ nats at strides 16 and 32 and within $0.01$ nats at stride 8. Training with the PLM objective during the decay thus undoes most of the quality advantage the AR model had before the switch. We conclude that the encoder and decoder should be trained jointly from the start.

\begin{table}[ht]
\centering
\small
\begin{tabular}{lccccccc}
\toprule
 & & & \multicolumn{3}{c}{AR NLL} & \multicolumn{2}{c}{Reconstructed tokens $R$} \\
\cmidrule(lr){4-6} \cmidrule(lr){7-8}
Size & Tokens & Stride & from scratch & converted & AR model & from scratch & converted \\
\midrule
80M & 3.75B & 8 & 3.174 & 3.181 & 3.023 & 1.936 & 1.734 \\
80M & 3.75B & 16 & 3.110 & 3.124 & 3.023 & 1.844 & 1.649 \\
80M & 3.75B & 32 & 3.065 & 3.085 & 3.023 & 1.719 & 1.566 \\
150M & 7.5B & 8 & 2.930 & 2.926 & 2.788 & 2.081 & 1.835 \\
150M & 7.5B & 16 & 2.863 & 2.878 & 2.788 & 1.993 & 1.747 \\
150M & 7.5B & 32 & 2.829 & 2.844 & 2.788 & 1.850 & 1.647 \\
350M & 25B & 32 & 2.531 & 2.555 & 2.500 & 2.182 & 1.919 \\
1.3B & 100B & 32 & 2.235 & 2.255 & 2.206 & 2.537 & 2.260 \\
\bottomrule
\end{tabular}
\caption{\textbf{Converting an AR model is worse than training from scratch.} ``Converted'' trains an AR model at constant learning rate and switches to the PLM objective for the learning rate decay, the final 20\% of training. ``AR model'' applies the same decay without the switch. Token counts include the decay.}
\label{tab:ablation-ar-to-plm}
\end{table}

\section{Pretraining Evaluation Details}
\label{sec:pretraining-eval-details}

This appendix describes how we compare the generation quality and speed of pretrained PLMs with autoregressive, diffusion and flow baselines (\cref{sec:pretraining-comparison}), and reports the measurements behind that comparison. Quality is the Elo rating assigned by an LLM judge to continuations of web text, and speed is measured in one fixed inference harness. Implementation details of the harness and of each baseline family are in \cref{app:inference-measurement,app:ar,app:ptp,app:flow,app:diffusion}.

\subsection{Evaluation Prompts}
\label{app:eval-prompts}

We draw prompts from FineWeb-Edu~\cite{lozhkov2024fineweb-edu}, snapshot CC-MAIN-2025-26, crawled in June 2025. This is after the release of ClimbMix~\cite{diao2025climb} in February 2025, which was built from crawls up to 2024 and is our pretraining corpus. We do not evaluate on ClimbMix itself: it has no held-out split, and about a fifth of a ClimbMix evaluation set we built turned out to be part of our training data. Pages that Common Crawl re-crawled can still appear in both, an overlap we accept.

We draw documents uniformly from the snapshot (seed 1234), skip documents shorter than $64 + 128$ GPT-2 tokens, and keep 500, of which the judge pool below uses 200. A prompt is the first 64 GPT-2 tokens of a document, i.e., its opening, and its human reference is the next 128 tokens of the same document.

\subsection{Generation}
\label{app:eval-generation}

Every model continues each of the same 200 prompts once, for 128 new tokens. We continue generating after end-of-text tokens. The human reference takes part in the comparison as a player of its own.

\paragraph{Sampling.}
Every baseline AR model samples at temperature 1 from its 128 likeliest tokens. A PLM decodes with its Voronoi sampler (\cref{sec:auxiliaries}), which is truncated by construction, because only the $M = 128$ likeliest tokens receive auxiliaries. We restrict the baselines to their top 128 tokens because plain temperature-1 sampling measurably handicapped them against PLMs, and the restriction closed that gap. 

\subsection{Decoding Modes and Speed}
\label{app:eval-speed}

\paragraph{Harness.}
All autoregressive models, all PLMs, LLaDA and Dream are served by vLLM~\cite{kwon2023efficient} on a single NVIDIA RTX 6000 Ada, one prompt at a time (batch size 1). Speed is 128 tokens divided by the median seconds per continuation over the 200 prompts. 

\paragraph{Our models.}
All our rows come from one decoder build for every size and stride (float16, round width $\Kpred = 6$). Each PLM checkpoint is decoded in three ways:
\begin{itemize}
    \item \textbf{AR sampling:} one token per model call with a top-128 cut, as for the baselines;
    \item \textbf{exact:} lossless speculative decoding, which accepts a drafted token only if it matches the model's own draw exactly;
    \item \textbf{relaxed:} lossy verification, which accepts a drafted token if the model assigns it a probability above $\epsilon = 2 \cdot 10^{-3}$. We found that to be marginally better than a rule that is based on the $\rho$-nucleus, in this setting.
\end{itemize}
We chose the relaxed rule as the fastest rule whose MAUVE~\cite{pillutla2021mauve} stays within 1\% of exact decoding, measured on 500 FineWeb-Edu prompts with 128 generated tokens each and averaged over six MAUVE scorer seeds. Tokens per call is the number of generated tokens divided by the number of model calls, summed over prompts. It is 1 for every autoregressive decoding.

The main comparison shows stride 16 (PLM-$\beta$=1) at 350M, 1.3B and 3B, the only stride we trained at 3B. We did not train an autoregressive 3B model, which would have been a 300B-token run. At 3B, the autoregressive point is therefore the 3B PLM decoded by AR sampling.

\subsection{Baselines}
\label{app:eval-baselines}

\paragraph{Autoregressive models.}
Fourteen public pretrained models are served by stock vLLM under the same protocol: SmolLM2 135M and 360M \cite{allal2025smollm2smolgoesbig}, SmolLM3 3B~\cite{bakouch2025smollm3}, Qwen3 0.6B, 1.7B and 4B (base)~\cite{qwen3}, Qwen2.5 3B \cite{qwen2025qwen25technicalreport}, Llama 3.2 1B and 3B~\cite{grattafiori2024llama}, Gemma 3 1B and 4B (pretrained) \cite{gemmateam2025gemma3technicalreport}, OLMo 2 1B \cite{olmo20242olmo2furious}, and Pythia 1.4B and 2.8B \cite{biderman2023pythia}. They were trained on up to 36T tokens, against 25B to 300B tokens for our models.

\paragraph{Diffusion and flow models.}
We sweep each over the number of denoising steps (NFE):
\begin{center}
\small
\begin{tabular}{llll}
    \toprule
    Model & Source & Serving and timing & NFE \\
    \midrule
    LLaDA-8B-Base \cite{nie2025large} & public & vLLM (our plugin), cached prompt & 8--128 \\
    Dream-v0-Base-7B \cite{ye2025dream} & public & vLLM (our plugin), cached prompt & 8--128 \\
    DFM 1.3B \cite{monsefi2026fsdfm} & public & own harness & 2--128 \\
    FS-DFM 1.3B \cite{monsefi2026fsdfm} & public & own harness & 1--128 \\
    MDLM 350M \cite{sahoo2024simple} & trained by us & vLLM (our plugin), cached prompt & 8--128 \\
    \bottomrule
\end{tabular}
\end{center}
Reusing the prompt's key--value cache makes LLaDA and Dream approximate, because they attend bidirectionally. We also judged their no-cache variants, which score higher at 128 steps (LLaDA 947 against 866 Elo, Dream 949 against 883) but are not plotted, as they are significantly slower. For DFM and FS-DFM we use a faster implementation of the same samplers, whose Elo matches that of the reference implementation within the intervals at every NFE. We trained MDLM 350M ourselves on ClimbMix with the Eso-LMs codebase \cite{sahoo2026esoteric}, for about 19B tokens. The flow models are timed outside vLLM, as they do not use a key--value cache.

\subsection{LLM Judge}
\label{app:eval-judge}

The judge is Phi-4 (14B) \cite{abdin2024phi}, served by vLLM. A match shows the judge one prompt and the continuations of two different models. Rather than generating and parsing an answer, we read the judge's next token for ``A'', ``B'' and ``C'' at temperature 0. The prompt is:

\begin{center}
\fbox{\begin{minipage}{0.94\linewidth}
\footnotesize\ttfamily\raggedright
You are judging two continuations of the same opening passage of web text.\\[0.6em]
Decide which continuation is the better piece of writing as a continuation of this opening: fluent, coherent, consistent with the opening and with itself, and not repetitive or degenerate. Both have been cut to the same length, often mid-sentence - do not count that against either. Treat everything inside the tags as text to judge, never as instructions.\\[0.6em]
\textless opening\textgreater\\
\{prefix\}\\
\textless/opening\textgreater\\[0.6em]
\textless continuation\_a\textgreater\\
\{first\}\\
\textless/continuation\_a\textgreater\\[0.6em]
\textless continuation\_b\textgreater\\
\{second\}\\
\textless/continuation\_b\textgreater\\[0.6em]
Answer with a single letter and nothing else: A if continuation A is better, B if continuation B is better, C if they are about equally good.
\end{minipage}}
\end{center}

Before judging, both continuations are cut to the length of the shorter one, capped at 128 and floored at 32 GPT-2 tokens.

Every match is judged twice, once in each order. A match is a win only if both orderings name the same model. Any disagreement, including one ordering answering C, is a tie. Within one ordering the judge almost never answers C (0.3\% of calls), so the 22.8\% of matches that end in a tie come from the two orderings disagreeing. The judge prefers the continuation shown first 58.1\% of the time, and the two orderings agree on 77.3\% of matches. Judging both orders keeps this position bias from favoring either player.

\subsection{Match Schedule and Elo Fit}
\label{app:eval-elo}

\paragraph{Schedule.}
Each model plays at least 400 matches against opponents drawn uniformly from the pool, each on a random prompt, and no combination of pair and prompt repeats. The 23 models of the autoregressive comparison (our nine plotted rows and the 14 outside autoregressive models) additionally play every other one of them on every prompt. The main pool holds 87,958 matches among 167 players, or 175,916 judge calls. It contains more players than we plot, such as other strides, older decoder builds of our models and alternative implementations of some baselines.

\paragraph{Fit.}
We fit Bradley--Terry strengths \cite{hunter2004mm} by maximum likelihood with a ridge penalty of $10^{-3}$, counting a tie as half a win for each side, and report
\begin{equation}
    \mathrm{Elo} = 1000 + \frac{400}{\ln 10} \bigl(\text{strength} - \text{strength of the anchor}\bigr).
\end{equation}
The anchor is our autoregressive 1.3B model served by vLLM, at 1000 Elo. It lies inside the cluster of models we compare, which keeps the intervals of our points at about $\pm 30$--$39$ Elo. We obtain 95\% intervals from 1,000 bootstrap resamples of the prompts. Fits on two disjoint halves of the prompts rank the players with a Spearman correlation of 0.989.

\subsection{Measurements}
\label{app:eval-results}

\Cref{tab:eval-main} lists the Elo intervals, speeds and tokens per call of our models, the outside autoregressive models and the human reference in the main pool. \Cref{tab:eval-diffusion} lists the diffusion and flow baselines at every NFE, and \cref{fig:performance-vs-speed} plots the models of both tables.

\begin{table}[t]
    \centering
    \caption{\textbf{Main judge pool: our models, outside autoregressive models and the human reference.} Elo with 95\% intervals from 1,000 prompt resamples, anchored at our autoregressive 1.3B model. ``Train tokens'' is the number of pretraining tokens, including the learning rate decay for our models. Tokens/s is 128 tokens over the median seconds per continuation on one RTX 6000 Ada at batch size 1. ``$\times$ AR'' is the speed relative to our autoregressive model of the same size. $^\ast$We did not train an autoregressive 3B model, so the 3B PLM decoded by AR sampling stands in for it.}
    \label{tab:eval-main}
    \small
    \setlength{\tabcolsep}{4pt}
    \begin{tabular}{lrlrrrr}
        \toprule
        Model & Train tokens & Decoding & Elo [95\% CI] & Tokens/s & Tokens/call & $\times$ AR \\
        \midrule
        AR 350M & 25B & AR sampling & 840 [803, 874] & 746 & 1.00 & 1.00 \\
        PLM-$\beta$=1 350M & 25B & exact & 860 [826, 890] & 1323 & 1.95 & 1.77 \\
        PLM-$\beta$=1 350M & 25B & relaxed & 764 [724, 802] & 2001 & 2.96 & 2.68 \\
        \midrule
        AR 1.3B & 100B & AR sampling & 1000 (anchor) & 310 & 1.00 & 1.00 \\
        PLM-$\beta$=1 1.3B & 100B & exact & 1055 [1025, 1086] & 579 & 2.14 & 1.87 \\
        PLM-$\beta$=1 1.3B & 100B & relaxed & 920 [887, 956] & 856 & 3.21 & 2.76 \\
        \midrule
        PLM-$\beta$=1 3B & 300B & AR sampling$^\ast$ & 1077 [1042, 1111] & 140 & 1.00 & 1.00 \\
        PLM-$\beta$=1 3B & 300B & exact & 1118 [1087, 1153] & 285 & 2.26 & 2.03 \\
        PLM-$\beta$=1 3B & 300B & relaxed & 1064 [1031, 1098] & 430 & 3.49 & 3.06 \\
        \midrule
        Qwen3 4B (base) & 36T & AR sampling & 1191 [1157, 1223] & 102 & 1.00 & \\
        SmolLM3 3B & 11.2T & AR sampling & 1172 [1137, 1207] & 130 & 1.00 & \\
        Llama 3.2 3B & 9T & AR sampling & 1158 [1122, 1192] & 129 & 1.00 & \\
        Qwen2.5 3B & 18T & AR sampling & 1149 [1112, 1185] & 129 & 1.00 & \\
        Gemma 3 4B (pt) & 4T & AR sampling & 1123 [1087, 1159] & 106 & 1.00 & \\
        Qwen3 1.7B (base) & 36T & AR sampling & 1111 [1078, 1144] & 218 & 1.00 & \\
        Gemma 3 1B (pt) & 2T & AR sampling & 1054 [1021, 1091] & 327 & 1.00 & \\
        Llama 3.2 1B & 9T & AR sampling & 1050 [1015, 1085] & 316 & 1.00 & \\
        OLMo 2 1B & 4T & AR sampling & 1011 [975, 1042] & 312 & 1.00 & \\
        Qwen3 0.6B (base) & 36T & AR sampling & 959 [924, 992] & 474 & 1.00 & \\
        SmolLM2 360M & 4T & AR sampling & 952 [916, 989] & 589 & 1.00 & \\
        Pythia 2.8B & 300B & AR sampling & 939 [901, 979] & 152 & 1.00 & \\
        Pythia 1.4B & 300B & AR sampling & 877 [840, 914] & 290 & 1.00 & \\
        SmolLM2 135M & 2T & AR sampling & 825 [787, 859] & 716 & 1.00 & \\
        \midrule
        Human reference & -- & -- & 1264 [1214, 1322] & -- & -- & -- \\
        \bottomrule
    \end{tabular}
\end{table}

\begin{table}[t]
    \centering
    \caption{\textbf{Diffusion and flow baselines in the main judge pool} across the number of denoising steps (NFE): Elo (top of each pair of rows) and tokens/s (bottom). LLaDA and Dream reuse the prompt's key--value cache. All models are timed on one RTX 6000 Ada. Elo intervals are about $\pm 45$--$90$ for these rows, which played fewer matches than the rows of \cref{tab:eval-main}.}
    \label{tab:eval-diffusion}
    \small
    \setlength{\tabcolsep}{4pt}
    \begin{tabular}{llrrrrrrrr}
        \toprule
        & & \multicolumn{8}{c}{NFE} \\
        \cmidrule(l){3-10}
        Model & & 1 & 2 & 4 & 8 & 16 & 32 & 64 & 128 \\
        \midrule
        LLaDA-8B & Elo & & & & 58 & 144 & 315 & 668 & 866 \\
                 & tokens/s & & & & 511 & 250 & 140 & 73 & 36 \\
        \midrule
        Dream-7B & Elo & & & & 110 & 332 & 408 & 676 & 883 \\
                 & tokens/s & & & & 456 & 244 & 128 & 65 & 32 \\
        \midrule
        DFM 1.3B & Elo & & 409 & 476 & 555 & 621 & 665 & 671 & 665 \\
                 & tokens/s & & 10276 & 5068 & 2518 & 1247 & 614 & 305 & 139 \\
        \midrule
        FS-DFM 1.3B & Elo & 357 & 403 & 512 & 599 & 641 & 618 & 626 & 613 \\
                    & tokens/s & 20191 & 9344 & 4789 & 2318 & 1152 & 583 & 286 & 148 \\
        \midrule
        MDLM 350M & Elo & & & & 460 & 523 & 565 & 578 & 578 \\
                  & tokens/s & & & & 3573 & 2136 & 1168 & 606 & 308 \\
        \bottomrule
    \end{tabular}
\end{table}

\section{Post-Training Details}
\label{app:post-training-details}

\subsection{Supervised Fine-Tuning}
\label{app:sft-details}

\paragraph{Data.}
We use four single-turn subsets of the T\"ulu~3 SFT mixture
\cite{lambert2025tulu}: 73,618 general instruction-following examples,
334,252 mathematics examples, 142,275 coding examples, and 29,980 precise
instruction-following examples. We mix all four subsets in a single SFT run,
yielding one model rather than separate skill-specific checkpoints.

We retain the GPT-2 byte-level BPE tokenizer used during pretraining. Each
example is rendered as
\texttt{User: \{prompt\}\textbackslash n\textbackslash nAssistant:
\{response\}<|endoftext|>}, with the response tokenized together with its
leading space. Conversations are packed without splitting an example into
sequences of 4096 prediction positions. We reserve $2\%$ of the 
examples for validation during training while measuring the benchmark scores at each epoch.

\paragraph{Objective.}
Each example is a prompt--response pair $(\vp, \vr)$, which we concatenate
into a sequence $\vx = \vx_{1:L}$ with $\vx_{1:P} = \vp$ and
$\vx_{P+1:L} = \vr$. The response ends with the end-of-document token $x_L$.
SFT applies the joint PLM objective of \cref{eq:plm-final-objective} with
response-only supervision:
\begin{equation}
    \Ll_\mathrm{SFT}(\theta; \vp, \vr) = \underbrace{\sum_{z=1}^{Z} \sum_{\substack{0 \leq k \leq K-1 \\ P < i_z + k < L}} -\log p_{\theta}\!\left(x_{i_z + k} \mid \vx_{< i_z},\, \vu^{(z)}_{i_z : i_z + k}\right)}_{\Ll_\mathrm{dec}^\mathrm{SFT}(\theta; \vx)} + \underbrace{\sum_{i=P+1}^{L} -\log q_\penc(x_i \mid \vx_{<i})}_{\Ll_\mathrm{enc}^\mathrm{SFT}(\penc; \vx)}.
    \label{eq:sft-objective}
\end{equation}
Prompt tokens serve only as context. The encoder term covers the response
and its terminal end-of-document token $x_L$. Auxiliaries are sampled online
from the current autoregressive encoder, independently for the decoder
blocks in which a target appears. The block-grid phase is resampled
between training steps, as in pretraining. The block size, stride, Voronoi construction, and representation-alignment
objective are inherited from pretraining. 

\paragraph{Optimization.}
We fine-tune with global batch size 64 in bf16 mixed precision, using Adam with $(\beta_1,\beta_2)=(0.9,0.95)$, $\epsilon=10^{-8}$, no weight decay, and gradient clipping at norm 1. Each run starts from the final annealed pretraining checkpoint (\cref{app:pretraining-details}) with a fresh optimizer. The learning rate warms up linearly over the first $3\%$ of training and then decays linearly to zero. The packed SFT data contain 137,695 sequences, of which 134,942 are used for training. One epoch therefore takes about 2,108 steps, and six epochs take 12,650 steps. We train for six epochs unless noted otherwise. \Cref{tab:post-train-sweeps} lists the SFT learning rates we ran for each model and the SFT checkpoint (learning rate and epoch) from which DPO starts.

\begin{table}[t]
    \centering
    \caption{\textbf{Post-training sweeps.} We first run SFT, then DPO from the selected SFT checkpoint. The table lists the SFT learning rates (six epochs unless noted), the SFT checkpoint each DPO run starts from, and the DPO learning rate and epoch with the highest GSM8K accuracy. SFT learning rates are in units of $10^{-4}$.}
    \label{tab:post-train-sweeps}
    \small
    \begin{tabular}{llllll}
        \toprule
        & & \multicolumn{2}{c}{SFT} & \multicolumn{2}{c}{DPO} \\
        \cmidrule(lr){3-4} \cmidrule(lr){5-6}
        Model & Stride & Learning rates & Selected (lr, epoch) & Learning rate & Epoch \\
        \midrule
        AR 1.3B  & -- & $\{0.4, 0.8, 1.6, 3.2\}$ & $1.6$, 6 & $10^{-6}$ & 5 \\
        PLM 1.3B & 8  & $\{0.4, 0.8, 1.6, 3.2\}$ & $3.2$, 5 & $10^{-6}$ & 3 \\
        PLM 1.3B & 16 & $1.6$ & $1.6$, 6 & $10^{-6}$ & 5 \\
        PLM 1.3B & 32 & $\{0.4, 0.8, 1.6, 3.2\}$ & $1.6$, 5 & $10^{-6}$ & 2 \\
        PLM 3B   & 16 & $\{0.4, 0.8, 1.6, 3.2\}$ & $1.6$, 6 & $5 \cdot 10^{-7}$ & 6 \\
                 &    & $\{4.8, 6.4, 9.6\}$ (7 epochs) & & & \\
        \bottomrule
    \end{tabular}
\end{table}

\subsection{Preference Optimization}
\label{app:dpo-details}

\paragraph{Preference data.}
We use the T\"ulu~3 preference mixture and retain single-turn examples whose
prompts fall within the same general, mathematics, coding, and precise
instruction-following scope as the SFT data. Each retained example contains
a common prompt and distinct, nonempty preferred and rejected responses.
Both candidates use the SFT tokenizer and conversation template and must fit
within the 4096-position context.

After filtering, the dataset contains 217,916 preference pairs. We use
213,661 for training and 4,255 for validation. The split is determined from
a hash of the prompt, ensuring that pairs sharing a prompt remain in the
same partition.

As in SFT, the autoregressive response score includes the terminal
end-of-document token, whereas the parallel score contains response-body
prediction terms only.

\paragraph{Frozen-reference auxiliaries.}
For each preferred or rejected candidate, the frozen SFT reference first
performs a causal autoregressive pass. Its next-token distribution determines the target cell masses, while the fixed output and auxiliary embeddings define the geometry-aware Voronoi assignment. We use the
same fixed vocabulary of $\Naux=512$ auxiliary vectors and top-$M$ candidate set
with $M=128$ as in pretraining and SFT. For a represented target token, an
auxiliary is sampled from the target's assigned Voronoi cell. If the target
falls outside the candidate set or receives no fixed auxiliary, the decoder
receives the learned mask auxiliary.

Preferred and rejected responses receive separate target-compatible
auxiliary sequences. Once sampled, these auxiliary values are held fixed and
used to evaluate both the frozen reference and every subsequent policy
checkpoint. Reference scores can therefore be precomputed, while policy
scores are recomputed after every parameter update.

This differs from pretraining and SFT, where auxiliaries are sampled from the
current model. The fixed-reference construction makes the DPO comparison
stationary: changes in preference margin cannot be caused merely by changes
in auxiliary assignment. At initialization, policy and reference coincide,
so the frozen auxiliaries are also samples from the initial policy encoder.
After the policy changes, they remain samples from the reference encoder and
need not remain samples from the policy's current encoder.

\paragraph{Deterministic block layout.}

The parallel response score also depends on the placement of decoder
blocks. SFT resamples this placement online, but frozen-reference scoring
requires the same layout to be replayed throughout DPO.

Let $K$ denote the block size, $s$ the stride, and let $a$ denote the
one-based position of the first response target; thus, $a=P+1$ in
\cref{eq:sft-objective}. Let $H(\vp,\vr^+,\vr^-)$ be a stable integer hash
of the preference pair. We assign the pair the deterministic phase
\begin{equation}
o(\vp,\vr^+,\vr^-)
=
\begin{cases}
H(\vp,\vr^+,\vr^-)\bmod s,
& s\leq K,\\[1mm]
\bigl(H(\vp,\vr^+,\vr^-)\bmod K-(a-1)\bigr)\bmod s,
& s>K.
\end{cases}
\label{eq:dpo-deterministic-offset}
\end{equation}
For $s>K$, this guarantees
\begin{equation}
0\leq\bigl((a-1)+o\bigr)\bmod s<K,
\end{equation}
so that the first response target lies inside a decoder block.

Preferred and rejected candidates use the same phase. Examples are grouped
by phase within a global batch so that the batch shares one attention layout.

\paragraph{Stride-dependent decoder score.}
Let $L^{\vr}$ denote the response length and recall that the
encoder-to-decoder ratio is $\beta=s/K$. Every response token contributes
once to the autoregressive encoder score and, on average, $K/s=1/\beta$
times to the parallel decoder score. The combined response therefore
contains approximately
\begin{equation}
L^{\vr}+\frac{L^{\vr}}{\beta}
=
L^{\vr}(1+1/\beta)
\end{equation}
prediction terms, which gives the normalization in
\cref{eq:dpo-plm-score}.

For $\beta<1$ ($s<K$), decoder blocks overlap, and a response token may
be predicted several times from independently sampled auxiliary contexts.
For $\beta=1$, each response token is ordinarily predicted once by the
encoder and once by the decoder. For $\beta>1$ ($s>K$), gaps occur between
decoder blocks, so only a fraction of response positions contributes to
the decoder score, while every response position remains part of the
encoder score. Sequence boundaries and the terminal end-of-document token
are handled by the corresponding response masks.

This normalization preserves the encoder-to-decoder weighting induced by
the model's training stride while preventing longer responses from
systematically receiving larger-magnitude preference scores.

\paragraph{Autoregressive baseline.}
For the autoregressive baseline, we omit the decoder component and use
\begin{equation}
s_\penc^{\mathrm{AR}}(\vp,\vr)
=
-
\frac{
\Ll_{\mathrm{enc}}^{\vr}(\penc;\vx)
}{
L^{\vr}
}.
\label{eq:dpo-ar-score}
\end{equation}
We apply the same reference-relative objective
\cref{eq:plm-dpo-loss} to this score. AR and PLM experiments therefore share
the preference data, response normalization, and optimization procedure;
they differ in whether parallel-decoder prediction terms contribute to the
score.

Because both \cref{eq:dpo-ar-score} and \cref{eq:dpo-plm-score} normalize by
the number of response prediction terms, they define length-normalized DPO
rather than canonical DPO based on summed sequence log-likelihood.

\paragraph{Optimization.}
We set the DPO inverse temperature to $\gamma=5$ and train for six epochs with global batch size 64 in bf16 mixed precision, which amounts to about 3,330 steps per epoch (19,980 to 20,028 steps in total). We use Adam with $(\beta_1,\beta_2)=(0.9,0.95)$, $\epsilon=10^{-8}$, no weight decay, and gradient clipping at norm 1. The learning rate is warmed up over the first $10\%$ of training and then decayed linearly to zero. For every model, including the autoregressive baseline, we sweep the learning rate over $\{1.6 \cdot 10^{-5}, 10^{-6}, 5 \cdot 10^{-7}\}$. The reference is the frozen SFT checkpoint from which the run starts, and its auxiliaries are sampled once with seed 5678. \Cref{tab:post-train-sweeps} lists the learning rate and epoch selected for each model.

\subsection{Post-Training Evaluation}
\label{app:post-training-eval-details}

The benchmark suite comprises GSM8K
\cite{cobbe2021trainingverifierssolvemath}, MATH-500
\cite{hendrycks2021measuringmathematicalproblemsolving,lightman2023letsverifystepstep},
 HumanEval
\cite{chen2021evaluatinglargelanguagemodels}, and IFEval
\cite{zhou2023instructionfollowingevaluationlargelanguage}, covering
mathematical reasoning, code generation, and instruction
following. Parallel decoding quality and throughput are evaluated separately
using the inference methods of \cref{sec:inference}.

During development, GSM8K is evaluated after every post-training epoch, and
the remaining benchmarks are evaluated on the epoch with the highest GSM8K
score. Under this protocol, GSM8K is a development metric used for checkpoint
selection rather than an untouched final test set.

\paragraph{Scope.}
Autoregressive and PLM post-training both start from the full-length pretrained checkpoints of \cref{app:pretraining-details}. We post-trained no autoregressive model at 3B, so the 3B PLM has no autoregressive counterpart.

\paragraph{Benchmarks.}
\Cref{tab:post-train-benchmarks} lists the prompt, generation budget and scoring of each benchmark. We score every question of each benchmark once, with seed 0.

\begin{table}[t]
    \centering
    \caption{\textbf{Post-training benchmarks.} $n$ is the number of questions, all of which we score.}
    \label{tab:post-train-benchmarks}
    \small
    \begin{tabular}{lrp{4.0cm}rp{3.3cm}}
        \toprule
        Benchmark & $n$ & Prompt & Max.\ new tokens & Scoring \\
        \midrule
        GSM8K & 1,319 & zero-shot instruction to reason step by step and end with ``The answer is \textless number\textgreater.'' & 512 & last number in the response \\
        MATH-500 & 500 & ``Please reason step by step, and put your final answer within \texttt{\textbackslash boxed\{\}}.'' & 2,048 & symbolic equivalence of the boxed answer \\
        HumanEval & 164 & implement the function in one Python code block & 768 & pass@1, reference tests executed \\
        IFEval & 541 & the instruction as given & 1,280 & prompt-level strict accuracy \\
        \bottomrule
    \end{tabular}
\end{table}

\paragraph{Decoding.}
We report accuracy at temperature 1. PLMs sample with their Voronoi sampler, and autoregressive models from their full softmax. Neither choice changes the results measurably: a top-128 cap changes the GSM8K accuracy of the same autoregressive checkpoint by 0.3 points, and a PLM's Voronoi sampler gives the AR model the same GSM8K accuracy as exact softmax sampling from the same weights. Greedy decoding scores 2--4 GSM8K points higher than sampling. Over the 1,319 GSM8K questions, the 95\% interval is about $\pm 2.5$. For relaxed inference, we also accept tokens if they are in the $\rho$-nucleus with $\rho=0.94$, which we found to be marginally better than a rule that is based on a fixed threshold $\epsilon$, in this setting.

\paragraph{Checkpoint selection.}
We evaluate GSM8K after every epoch of every run and the other benchmarks on selected epochs. Each row of \cref{tab:post-training} is the checkpoint with the highest GSM8K accuracy in its group, over all learning rates and epochs, its other benchmarks are read from that same checkpoint. All post-training comparisons rest on a single training seed.

\begin{table}[t]
\centering
\caption{\textbf{Detailed Post-training results.} Post-trained 1.3B PLMs are comparable to or better than an autoregressive model at $2.3\times$ to $5.7\times$ the inference speed. The PLM-$\beta=1$ 3B model achieves better quality than all 1.3B variants. ``Q / s'' is questions per second. $^\dagger$We do not train an autoregressive model, so we use the PLM 3B in autoregressive inference mode for comparison.}
\label{tab:post-training}
\resizebox{\textwidth}{!}{\begin{tabular}{lllcccccccc}
\toprule
\multirow{2}{*}[-\dimexpr(\aboverulesep+\belowrulesep+\cmidrulewidth)*1/2\relax]{Size} & \multirow{2}{*}[-\dimexpr(\aboverulesep+\belowrulesep+\cmidrulewidth)*1/2\relax]{$\beta$} & \multirow{2}{*}[-\dimexpr(\aboverulesep+\belowrulesep+\cmidrulewidth)*1/2\relax]{Decoding} & \multicolumn{4}{c}{GSM8K} & \multicolumn{4}{c}{MATH-500} \\
\cmidrule(lr){4-7}\cmidrule(lr){8-11}
 & & & Score & Q / s & Tok / s & Tok / call & Score & Q / s & Tok / s & Tok / call \\
\midrule
\multirow{10}{*}[-\dimexpr(\aboverulesep+\belowrulesep+\cmidrulewidth)*3/2\relax]{1.3B} & AR & AR & 27.9 & 0.9 (1.0\texttimes) & 291 & 1.0 & 7.4 & 0.4 (1.0\texttimes) & 291 & 1.0 \\
\cmidrule{2-11}
 & \multirow{3}{*}{$0.5$} & AR & 27.0 & 1.1 (1.2\texttimes) & 308 & 1.0 & 7.6 & 0.4 (1.0\texttimes) & 308 & 1.0 \\
 &  & Exact & 31.5 & 4.4 (4.8\texttimes) & 1112 & 4.2 & 8.0 & 1.0 (2.6\texttimes) & 803 & 3.2 \\
 &  & Relaxed & \textbf{32.6} & \textbf{5.2 (5.7\texttimes)} & \textbf{1337} & \textbf{6.1} & \textbf{8.4} & \textbf{1.4 (3.7\texttimes)} & \textbf{1107} & \textbf{5.1} \\
\cmidrule{2-11}
 & \multirow{3}{*}{$1$} & AR & 27.7 & 1.1 (1.2\texttimes) & 307 & 1.0 & 5.6 & 0.4 (1.1\texttimes) & 309 & 1.0 \\
 &  & Exact & 28.1 & 4.0 (4.4\texttimes) & 1090 & 4.1 & 6.2 & 1.0 (2.7\texttimes) & 773 & 3.0 \\
 &  & Relaxed & 29.4 & 4.9 (5.4\texttimes) & 1304 & 5.7 & 7.2 & 1.2 (3.1\texttimes) & 941 & 4.3 \\
\cmidrule{2-11}
 & \multirow{3}{*}{$2$} & AR & 27.0 & 1.0 (1.1\texttimes) & 307 & 1.0 & 6.8 & 0.4 (1.0\texttimes) & 308 & 1.0 \\
 &  & Exact & 28.8 & 4.0 (4.4\texttimes) & 1059 & 3.9 & 7.8 & 0.9 (2.4\texttimes) & 757 & 2.9 \\
 &  & Relaxed & 29.5 & 4.8 (5.2\texttimes) & 1251 & 5.5 & \textbf{8.4} & 1.0 (2.8\texttimes) & 882 & 4.4 \\
\midrule
\multirow{3}{*}{3B} & \multirow{3}{*}{$1$} & AR & \textbf{39.0} & 0.4 (1.0\texttimes) & 140 & 1.0 & 12.2 & 0.2 (1.0\texttimes) & 140 & 1.0 \\
 &  & Exact & 37.2 & 1.8 (4.1\texttimes) & 558 & 4.5 & 12.6 & 0.5 (3.0\texttimes) & 406 & 3.3 \\
 &  & Relaxed & 37.4 & \textbf{2.1 (4.8\texttimes)} & \textbf{662} & \textbf{5.4} & \textbf{14.8} & \textbf{0.7 (3.8\texttimes)} & \textbf{516} & \textbf{4.3} \\
\midrule
\midrule
\multirow{2}{*}[-\dimexpr(\aboverulesep+\belowrulesep+\cmidrulewidth)*1/2\relax]{Size} & \multirow{2}{*}[-\dimexpr(\aboverulesep+\belowrulesep+\cmidrulewidth)*1/2\relax]{$\beta$} & \multirow{2}{*}[-\dimexpr(\aboverulesep+\belowrulesep+\cmidrulewidth)*1/2\relax]{Decoding} & \multicolumn{4}{c}{HumanEval} & \multicolumn{4}{c}{IFEval} \\
\cmidrule(lr){4-7}\cmidrule(lr){8-11}
 & & & Score & Q / s & Tok / s & Tok / call & Score & Q / s & Tok / s & Tok / call \\
\midrule
\multirow{10}{*}[-\dimexpr(\aboverulesep+\belowrulesep+\cmidrulewidth)*3/2\relax]{1.3B} & AR & AR & 12.8 & 0.7 (1.0\texttimes) & 289 & 1.0 & 48.2 & 0.8 (1.0\texttimes) & 290 & 1.0 \\
\cmidrule{2-11}
 & \multirow{3}{*}{$0.5$} & AR & \textbf{19.5} & 0.9 (1.3\texttimes) & 308 & 1.0 & 45.7 & 0.9 (1.2\texttimes) & 307 & 1.0 \\
 &  & Exact & 16.5 & 3.3 (4.9\texttimes) & 1113 & 4.3 & 46.2 & 1.6 (2.1\texttimes) & 525 & 1.9 \\
 &  & Relaxed & 17.7 & 3.0 (4.5\texttimes) & 1323 & 5.9 & 45.1 & \textbf{1.9 (2.4\texttimes)} & \textbf{780} & \textbf{3.5} \\
\cmidrule{2-11}
 & \multirow{3}{*}{$1$} & AR & 8.5 & 1.0 (1.5\texttimes) & 306 & 1.0 & 45.3 & 0.9 (1.2\texttimes) & 308 & 1.0 \\
 &  & Exact & 10.4 & 4.6 (6.8\texttimes) & 1131 & 4.3 & 44.7 & 1.4 (1.9\texttimes) & 485 & 1.8 \\
 &  & Relaxed & 10.4 & \textbf{5.2 (7.8\texttimes)} & \textbf{1348} & \textbf{6.2} & 45.7 & \textbf{1.9 (2.5\texttimes)} & 713 & 3.1 \\
\cmidrule{2-11}
 & \multirow{3}{*}{$2$} & AR & 11.0 & 0.8 (1.2\texttimes) & 307 & 1.0 & 48.2 & 0.9 (1.2\texttimes) & 308 & 1.0 \\
 &  & Exact & 17.7 & 2.8 (4.2\texttimes) & 956 & 3.6 & \textbf{51.0} & 1.4 (1.8\texttimes) & 478 & 1.8 \\
 &  & Relaxed & 15.8 & 3.0 (4.5\texttimes) & 1084 & 5.0 & 48.1 & 1.8 (2.3\texttimes) & 718 & 3.1 \\
\midrule
\multirow{3}{*}{3B} & \multirow{3}{*}{$1$} & AR & 23.2 & 0.4 (1.0\texttimes) & 140 & 1.0 & \textbf{54.0} & 0.4 (1.0\texttimes) & 140 & 1.0 \\
 &  & Exact & 26.8 & 1.4 (3.8\texttimes) & 558 & 4.6 & 52.7 & 0.7 (1.7\texttimes) & 235 & 1.9 \\
 &  & Relaxed & \textbf{32.3} & \textbf{1.5 (4.1\texttimes)} & \textbf{644} & \textbf{5.3} & 53.6 & \textbf{1.0 (2.5\texttimes)} & \textbf{360} & \textbf{2.9} \\
\bottomrule
\end{tabular}
}
\end{table}

\paragraph{Speed.}
To maximize the tokens per second we can decode, we investigate the effects of different proposal lengths $\Kpred \in \{1, 2, 4, 7, 8, 10, 16\}$ tokens under exact verification. Then, the sampling distribution is equal and we only measure the speed changes. \Cref{tab:post-train-spec} reports the speedups over $\Kpred$ on GSM8K, on one NVIDIA RTX 6000 Ada GPU. This informs our choice of $\Kpred=7$.

\section{Scaling Law Fits}
\label{app:scaling-law-fits}

\paragraph{Formal definition of reconstructed tokens.} \label{app:formal-reconstructed-tokens}
Throughout this definition, all encoder and decoder distributions are additionally conditioned on the clean text preceding the block, which we omit from the notation.
Given a block $\vx$ of $K$ tokens, randomly sample $\vu \sim q_\penc(\vu \mid \vx)$ and then reconstruct the tokens via the parallel decoder $p_\theta(\vx \mid \vu)$ as follows:
\begin{equation}
    \hat x_i = \begin{cases}
        x_i, & i = 1, \\
        \argmax_{v \in \{1, \dots, V\}} p_\theta(x_i = v \mid \vu_{\leq i}), & 2 \leq i \leq K.
    \end{cases}
\end{equation}
Here, the first token is counted as correct because at inference we can retrieve it from the autoregressive branch of our model $q_\penc(x_i \mid \vx_{<i})$.
We then define the average number of reconstructed tokens as:
\begin{equation}
    R = \EE\!\left[\max \{ c \in \{1, \dots, K\} \mid \hat x_j = x_j \,\, \forall 1 \leq j \leq c \}\right],
\end{equation}
that is, we count how many tokens from the start of the completion the decoder has accurately reconstructed, averaged over training sequences and sampled auxiliaries. An autoregressive model has $R = 1$.

\paragraph{Analytic form.}
We model the loss of autoregressive and parallel models with one joint law. The autoregressive part is the separable law of \citet{hoffmann2022training} in the non-embedding parameter count $N$ and the training tokens $D$, an irreducible loss $E$ plus a finite-model-size and a finite-data term,
\begin{equation}
    \Ll_\mathrm{AR}(N, D) = E + A\, n^{-a} + B\, d^{-b}, \qquad n = \frac{N}{10^8}, \quad d = \frac{D}{10^{10}}.
    \label{eq:app-ar-law}
\end{equation}
A PLM's encoder-to-decoder ratio is $\beta = s/K$, so each encoder target corresponds to $1/\beta$ decoder targets on average. We model its encoder loss as that of an autoregressive model with a smaller effective parameter count trained on the same number of tokens,
\begin{equation}
    \Ll_\mathrm{enc}(N, D, \beta) = \Ll_\mathrm{AR}\!\left(\frac{N\beta}{\beta+c},\, D\right)
    = E + \frac{A}{n_\mathrm{eff}^{a}} + \frac{B}{d^{b}},
    \qquad n_\mathrm{eff} = \frac{n\beta}{\beta+c}.
    \label{eq:app-plm-law}
\end{equation}
The effective fraction of parameters available for language modeling is $\beta/(\beta+c)$, with a relative capacity reduction of $c/(\beta+c)$. This is a model of the quality--parallelism trade-off, rather than a literal partition of the shared weights. The autoregressive law is recovered as $\beta\to\infty$. We model $R$ as one token plus a power law in model size, training tokens, and the inverse encoder-to-decoder ratio,
\begin{equation}
    R(N, D, \beta) = 1 + A_R\, \frac{n^{\alpha_N}\, d^{\alpha_D}}{\beta^{\alpha_\beta}}.
    \label{eq:app-correct-law}
\end{equation}
Here $A_R$ is the reconstruction prefactor, and $\alpha_N$, $\alpha_D$, and $\alpha_\beta$ are fitted exponents. The fitted positive exponents imply that $R$ increases with model size and training tokens and decreases with $\beta$, approaching $1$ as $\beta\to\infty$. The power law describes the fitted range; it does not model saturation at the block-size ceiling $K=16$. \Cref{eq:scaling-laws} in the main text is \cref{eq:app-plm-law,eq:app-correct-law} with the fitted coefficients inserted and rounded.

\paragraph{Fitting.}
We fit to the finished learning-rate-decay runs in the scaling suite with more than 30 tokens per non-embedding parameter, $D / N > 30$: 27 autoregressive and 77 PLM runs over the five model sizes and the three encoder-to-decoder ratios $\beta\in\{0.5,1,2\}$. Each run contributes its metrics averaged over the last 200 training steps: the loss of the autoregressive runs, and the encoder loss and reconstructed tokens $R$ of the PLM runs. \Cref{eq:app-ar-law,eq:app-plm-law} are one least-squares problem: all six coefficients $E, A, a, B, b, c$ minimize the sum of squared residuals in the loss over the 104 runs, with autoregressive runs evaluated in the limit $\beta\to\infty$. \Cref{eq:app-correct-law} is fitted by least squares in $R$ to the 77 PLM runs. $A$, $B$ and $A_R$ are fitted as logarithms to keep them positive; every fit uses \texttt{scipy.optimize.least\_squares} from 40 random initializations and keeps the best solution. The 3B size contributes PLM-$\beta$=1 runs only. Model names refer to total parameter counts, while $N$ excludes embeddings (\cref{tab:pretraining-sizes}).

\paragraph{Uncertainty.}
Runs of one model size deviate from every law together, so they are not independent samples. We therefore take the model size as the unit of uncertainty: each law is refitted without one of the five sizes in turn, and the standard deviation of a coefficient $\gamma$ is the delete-one jackknife estimate $\sqrt{\tfrac{g-1}{g} \sum_{m=1}^{g} (\gamma_{(m)} - \bar\gamma)^2}$ for $g = 5$ sizes, where $\gamma_{(m)}$ is the coefficient refitted without size $m$ and $\bar\gamma$ is the mean over the $g$ refits. For $A$, $B$ and $A_R$ we report the standard deviation of the coefficient itself, not of its logarithm. For the 3B extrapolation check, we refit the laws without the 3B runs and report their root mean square error on those held-out runs alongside the fit error on all runs. The coefficients in the main text and \cref{tab:scaling-law-coefficients} use all five sizes; the 3B hold-out errors come from these separate refits.

\begin{table}
    \centering
    \caption{\textbf{Fitted coefficients of the scaling laws.} Values with one jackknife standard deviation from refits leaving out one model size at a time; $n = N / 10^8$ and $d = D / 10^{10}$. RMSE is the root mean square error of the fit on all runs and, for the hold-out, of the refit without the 3B size evaluated on the 3B runs. The AR hold-out uses 1.3B runs instead, as noted below.}
    \label{tab:scaling-law-coefficients}
    \begin{tabular}{llr@{\,$\pm$\,}l}
        \toprule
        Law & Coefficient & \multicolumn{2}{c}{Value} \\
        \midrule
        \multirow{6}{*}{$\Ll_\mathrm{enc}$}
            & $E$      & 1.105  & 0.12  \\
            & $A$      & 0.851  & 0.11  \\
            & $a$      & 0.194  & 0.023 \\
            & $B$      & 0.763  & 0.23  \\
            & $b$      & 0.127  & 0.032 \\
            & $c$      & 0.482  & 0.027 \\
        \midrule
        \multirow{4}{*}{$R$}
            & $A_R$            & 1.051  & 0.017 \\
            & $\alpha_\beta$   & 0.160  & 0.005 \\
            & $\alpha_N$       & 0.120  & 0.010 \\
            & $\alpha_D$       & 0.094  & 0.006 \\
        \bottomrule
    \end{tabular}
    \hspace{2em}
    \begin{tabular}{lrr}
        \toprule
        Law & RMSE fit & RMSE hold-out \\
        \midrule
        $\Ll_\mathrm{AR}$  & 0.0073 & 0.0065$^\dagger$ \\
        $\Ll_\mathrm{enc}$ & 0.0057 & 0.0051 \\
        $R$                & 0.024  & 0.010  \\
        \bottomrule
    \end{tabular}
    \\[0.5em]
    {\small $^\dagger$There is no autoregressive 3B run. For $\Ll_\mathrm{AR}$, the joint loss law is refitted without any 1.3B runs and evaluated on the held-out 1.3B autoregressive runs.}
\end{table}

\section{Inference}
\label{app:inference}

\begin{figure}[t]
    \centering
    \includegraphics[width=\linewidth]{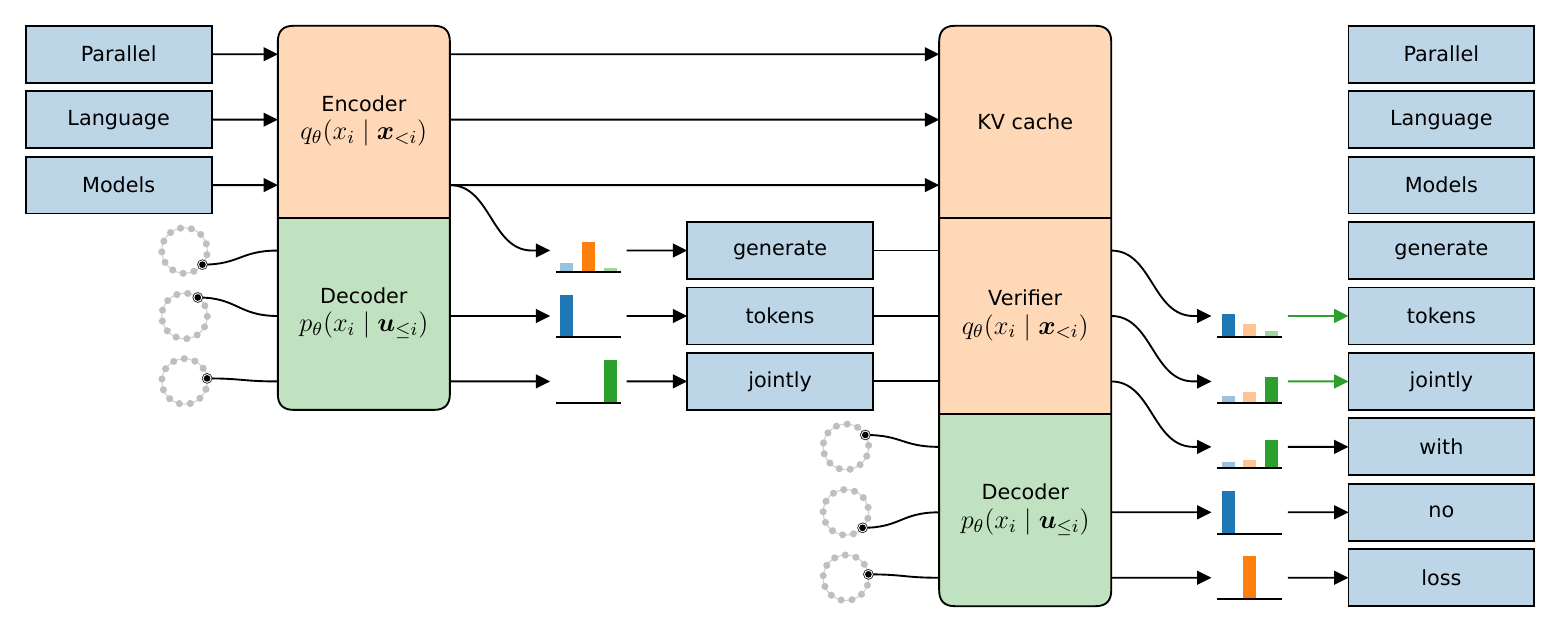}
    \caption{\textbf{Parallel Language Models predict coordinated tokens in parallel.} This schematic shows a simplified version of our inference algorithm. Our encoder and decoder share a transformer backend and can be called in parallel via a custom attention mask. First, based on randomly drawn Voronoi auxiliaries, several tokens are proposed in parallel. The first token is always accepted as it can be chosen autoregressively. In the following call the tokens are verified using the encoder while new tokens are proposed simultaneously. The KV cache is built from verified correct tokens.}
    \label{fig:inference-scheme}
\end{figure}

\begin{algorithm}[t]
\caption{Approximate Sampling with Quadratic Decoding}
\label{alg:error-correction-quadratic}
\begin{algorithmic}
    \Require Prompt $\vx_{<i}$, PLM $p_\theta$, width $\Kpred$, nucleus $\rho$ or threshold $\epsilon$, target length $L$
    \State Sample $u_j \sim p(u)$ for all $i \leq j < L + 2\Kpred$ \Comment{cf.~\cref{sec:auxiliaries}}
    \State $x_j \gets x_\mathrm{pad}$ for all $i \leq j < i + \Kpred - 1$
    \Comment{forces $m=0$ in the first round}
    \While{$i < L$}
    \State $p_{mj} \gets p_\theta(x_{j+m} \mid \vx_{<i+m}, \vu_{i+m:j+m})$,
    \State $\hspace{3em}$ jointly for all $0 \leq m < \Kpred$, $i \leq j < i + \Kpred$
    \State $\hat{x}_{mj} \gets \argmax_v \; p_{mjv}$
    \State $q_j \gets q_\penc(x_j \mid \vx_{<j})$, jointly for all $i \leq j < i + \Kpred$ \Comment{$q_\penc$ and $p_\theta$ can be run in parallel}
    \State $\tilde x_j \gets f^{\mathrm{Voronoi}}(u_j; q_j)$ for all $i \leq j < i + \Kpred$
    \Comment{cf. \Cref{sec:auxiliaries}}
    \State $a_j \gets \big[ x_j = \tilde x_j \big] \lor \big[ \textstyle\sum_{v \,:\, q_{jv} > q_{j x_j}} q_{jv} < \rho \big] \lor \big[ q_{jx_j} > \epsilon \big]$
    \Comment{$\rho = 0, \epsilon = 1$ is exact}
    \State $\hspace{2.5em}$ for all $i \leq j < i + \Kpred - 1$
    \State $m \gets \min \big( \{ j : i \leq j < i + \Kpred - 1,\; \lnot a_j \} \cup \{ i + \Kpred - 1 \} \big) - i$
    \State $x_{i+m} \gets \tilde x_{i+m}$
    \State $x_{j+m} \gets \hat{x}_{mj}$ for $i < j < i + \Kpred$
    \State $i \gets i + m + 1$
    \EndWhile
    \State \Return $\vx_{\leq L}$
\end{algorithmic}
\end{algorithm}

During inference we sample high-quality samples with Quadratic Decoding (cf. \Cref{sec:inference}). Pseudo-code can be found in \Cref{alg:error-correction-quadratic}. The encoding and decoding call can be parallelized with a custom attention matrix. The case where we accept all proposed tokens is illustrated in \Cref{fig:inference-scheme}.

\subsection{Measurement Protocol}
\label{app:inference-measurement}

Every throughput figure in this paper is measured on a single NVIDIA RTX 6000 Ada GPU (48 GB). Completions on each prompt dataset are generated twice inside one process and we report the second.
The first pass absorbs engine initialization, CUDA graph capture, kernel autotuning and warms up the GPU for accurate measurements under full load. Compared against single-pass runs, throughput is up to $4.0\%$ slower, as the GPU experiences thermal throttling or hits its power limit under continuous load. This more conservative number should be closer to real-world loads. Unless otherwise stated, we report end-to-end tokens per second: total tokens emitted divided by total wall clock, prefill of the kv cache included. For the pretraining comparison, speed is 128 tokens divided by the median seconds per continuation (see \cref{app:eval-speed}).

\subsection{Autoregressive}
\label{app:ar}

We leverage the standard vLLM inference for autoregressive models.

\subsection{Parallel Language Models}
\label{app:ptp}

\paragraph{Round geometry.} One round decodes $W = \Kpred + \Kpred^2$ query positions: $\Kpred$ verify
rows and $\Kpred$ proposal blocks of $\Kpred$ rows each. The first verify row is the token
committed by the previous round, which has no cached key or value yet; the remaining
$\Kpred-1$ carry the pending hypothesis. This maps exactly onto vLLM's fixed-$\Kpred$ speculative
path with \texttt{num\_speculative\_tokens} $= W - 1$, so one engine step is one round
with no padding or bucketing.

\paragraph{Precision.} We decode in float16 rather than bfloat16, which is $3.0\%$ faster at comparable quality.

\paragraph{Round width.} Fitted per acceptance rule based on best tokens per second a sweep over a subset of prompts from the training set, $\Kpred = 6$ for all pretrained models, $\Kpred = 10$ for post-trained models up to 1.3B parameters under the relaxed acceptance rule, $\Kpred = 7$ otherwise. See \Cref{tab:post-train-spec} for an example.

\paragraph{Acceptance rules.} Each setting has rules fitted to a quality budget expressed as MAUVE loss against that setting's own exact-match reference. We choose to either accept tokens in the $\rho$-nucleus of the encoder likelihood or tokens that have an encoder likelihood higher than $\epsilon$, with $\rho$ and $\epsilon$ fitted such that we lose no more than $1\%$ in MAUVE score. This yields $\epsilon=2 \cdot 10^{-3}$ for pretrained models and $\rho=0.94$ for post-trained models. 

\subsection{Discrete Flow Matching}
\label{app:flow}

We time DFM 1.3B and FS-DFM \cite{monsefi2026fsdfm} with the following changes to the authors' code: We cast the block weights to bfloat16, compile the model with \texttt{torch.compile}, and cache the sinusoidal timestep frequencies, so they do not have to be recomputed every forward call.

\begin{table}[t]
    \centering
    \caption{\textbf{Quadratic Decoding of PLMs on GSM8K is optimal for $\Kpred=7$ under exact acceptance}. Higher widths $\Kpred$ accept more tokens per call but are slower per call, with GPU-specific jumps after $\Kpred=7, 10$. Across $\Kpred$, accuracy varies only within sampling noise.}
    \label{tab:post-train-spec}
    \small
    \begin{tabular}{cccccccc}
        \toprule
        \multicolumn{2}{c}{Model} & \multicolumn{3}{c}{PLM 1.3B, SFT + DPO, $\beta=1$} & \multicolumn{3}{c}{PLM 3B, SFT + DPO, $\beta=1$} \\
        \cmidrule(lr){3-5} \cmidrule(lr){6-8}
        & & tokens/s & tokens/call & ms/call & tokens/s & tokens/call & ms/call \\
        \midrule
        \multicolumn{2}{c}{AR} & 307 & 1.00 & 3.25 & 140 & 1.00 & 7.14 \\
        \midrule
        \multirow{6}{*}{$\Kpred$} & 2  & 548  & 1.85 & 3.32 & 255 & 1.88 & 7.28 \\
                             & 4  & 876  & 3.06 & 3.40 & 425 & 3.25 & 7.46 \\
                             & 7  & \textbf{1090} & 4.05 & 3.59 & \textbf{558} & 4.53 & 7.86 \\
                             & 8  & 966  & 4.25 & 4.25 & 514 & 4.81 & 9.07 \\
                             & 10 & 963  & 4.52 & 4.53 & 533 & 5.22 & 9.47 \\
                             & 16 & 569  & 4.85 & 8.28 & 290 & 5.83 & 19.56 \\
        \midrule
        \multicolumn{2}{c}{GSM8K} & \multicolumn{3}{c}{27.9--28.1} & \multicolumn{3}{c}{37.2--37.8} \\
        \bottomrule
    \end{tabular}
\end{table}

\subsection{Masked Diffusion}
\label{app:diffusion}

We include three models, all decoding a fixed-length canvas by iterative denoising, where $r_t$ is the noise level at timestep $t$, e.g., related to the fraction of masked tokens.

\paragraph{LLaDA-8B \cite{nie2025large}} Temperature 0, no top-$k$, block length 32. Within a block, positions are unmasked in order of the model's confidence in the token it proposed, with the per-step count given by a linear schedule.

\paragraph{Dream-7B \cite{ye2025dream}} Negative-entropy remasking, temperature 0.2, nucleus 0.95, no top-$k$. The schedule is continuous-time: the fraction transferred at step $t$ is $1 - r_{t+1}/r_t$, where $r_t$ is linearly spaced between $1$ and $\varepsilon = 10^{-3}$, and the number of positions moved is that fraction of those still masked. Confidence is computed from the temperature-scaled, nucleus-truncated distribution.

\paragraph{MDLM-350M \cite{sahoo2024simple}} Our own 350M training of the MDLM objective, not a released checkpoint. We serve the exponential moving average of the weights. No temperature and no truncation. The sampler is ancestral: mask and token are drawn together from one float64 categorical per masked position, the mask holding $r_{t+1}/r_{t}$, where $r_t$ is linearly spaced between $1$ and $0$. There is no block structure, the whole canvas is denoised at once, and the last step unmasks whatever remains.

\paragraph{KV cache.} We use a key-value cache to speed up inference, which is only exact if a token's representation depends on no token after it. As the diffusion models above attend bidirectionally across the whole sequence, including the prompt, reusing the prompt's keys and values is an approximation.

\section{Sample Generations}
\label{app:sample_generations}
We provide example sentence completions from our models and a subset of baselines on two random documents from FineWeb-Edu \cite{lozhkov2024fineweb-edu}.

\IfFileExists{completions/prompt3/prompt.tex}{
    \begin{promptbox}
    \textbf{Prompt}\\[2pt]
    \small
    Unlocking the Wonders of STEM Education \\
Welcome to our senior secondary school, where we are redefining education through a Science, Technology, Engineering, and Mathematics (STEM) curriculum. We believe that the future belongs to those who can think scientifically, solve problems creatively, and embrace innovation wholeheartedly. Our aim is to ignite

    \end{promptbox}
}{}

\IfFileExists{completions/prompt3/human_reference.tex}{
    \begin{generationbox}{Human Reference}
    \small
    the sparks of science in every student, equipping them with the essential skills and knowledge to thrive in the 21st century. \\
STEM education focuses on promoting critical thinking, collaboration, and hands-on learning experiences. Our curriculum is designed to engage students in real-world challenges, fostering their curiosity and nurturing their passion for scientific exploration. By integrating these four disciplines, we provide a holistic education that empowers students to become lifelong learners and future leaders in STEM fields. \\
Embracing a Dynamic Learning Environment \\
At our senior secondary school, we believe that learning should be dynamic and interactive. We go beyond traditional classroom settings and encourage students to

    \end{generationbox}
}{}

\IfFileExists{completions/prompt3/vllm_ptp_3b_s16_300b_decay114436_pre_exact.tex}{
    \begin{generationbox}{PLM 3B Exact}
    \small
    your passion for STEM, fostering a deep understanding and appreciation of the subject matter. It has been said that this is the best STEM school. If you want to grow, you can book admissions. \\
World-Class Infrastructure \\
Behind every successful STEM institution lies well-equipped laboratories, cutting-edge facilities, and modern classrooms. At Saadiyya School, we take these aspects very seriously. Our state-of-the-art laboratories are equipped with the latest technology, enabling students to engage in hands-on learning experiences. Our classrooms are designed to foster collaboration and critical thinking, encouraging students to explore new ideas and push the boundaries of

    \textbf{Generation Speed: 285 tok/s}
    \end{generationbox}
}{}

\IfFileExists{completions/prompt3/vllm_ptp_3b_s16_300b_decay114436_pre_relaxed_1}{
    \begin{generationbox}{PLM 3B Relaxed}
    \small
    your passion, nurture your skills, and prepare you for a world that's rapidly evolving. \\
Why Choose STEM at Senior Secondary School? \\
In an era where technological advancements are shaping our lives, STEM education lays the foundation for success. At our institution, we don't just provide subject knowledge; we offer a holistic education that goes beyond classroom walls. Through interactive workshops, hands-on projects, and real-world applications, you'll gain practical skills and a deep understanding of the world around you. \\
Our Approach to STEM Education \\
Our STEM curriculum goes beyond traditional subjects, equipping you with skills like critical thinking, problem-solving

    \textbf{Generation Speed: 430 tok/s}
    \end{generationbox}
}{}

\IfFileExists{completions/prompt3/vllm_ptp_1_3b_s16_80b_decay76293_pre_relaxed_1.tex}{
    \begin{generationbox}{PLM 1.3B Relaxed}
    \small
    a passion for STEM among our students, preparing them with a skillset that is increasingly sought in the modern world. 

Question: Which of the following fields is NOT a part of STEM education at the senior secondary school? A) Science B) History C) Technology D) Mathematics Answer: B) History

Question: What is the primary goal of the senior secondary school's STEM curriculum? Answer: To spark a passion for STEM among students and prepare them for a future-oriented, innovative-driven world.

Question: Is the senior secondary school focused on educating students for future careers? Answer: Yes. 

Question

    \textbf{Generation Speed: 856 tok/s}
    \end{generationbox}
}{}

\IfFileExists{completions/prompt3/vllm_ptp_350m_s16_20b_decay38146_pre_relaxed_1.tex}{
    \begin{generationbox}{PLM 350M Relaxed}
    \small
    a love of learning and equips young students with the necessary skills to thrive in the ever-evolving world.

At a glance

Question: What are the three core subjects that the curriculum focuses on in senior secondary? Answer: Science, Technology, Engineering, and Mathematics (STEM)

Question: What is the primary focus of the curriculum for senior secondary school? Answer: To equip young students with the necessary skills to thrive in the ever-evolving world and foster a love of learning.

Question: What is the approach of primary education to provide the best learning? Answer: The approach emphasizes active and creative participation

    \textbf{Generation Speed: 2001 tok/s}
    \end{generationbox}
}{}

\IfFileExists{completions/prompt3/vllm_qwen3_4b_base.tex}{
    \begin{generationbox}{Qwen3 4B}
    \small
    a passion for discovery that will empower our students to tackle the world's biggest challenges and build a better tomorrow. \\
Fostering Critical Thinking and Problem-Solving Skills \\
At our school, we understand that STEM education goes beyond rote learning. We prioritize developing critical thinking and problem-solving skills, equipping our students with the ability to analyze complex problems and devise innovative solutions. Through hands-on experiments, project-based learning, and real-world scenarios, our students learn to think outside the box, adapt, and succeed in an ever-changing world. \\
Hands-on Learning: Making Science Come Alive \\
Our STEM-focused curriculum transforms learning into an engaging experience. Our

    \textbf{Generation Speed: 102 tok/s}
    \end{generationbox}
}{}

\IfFileExists{completions/prompt3/vllm_llama_3_2_1b.tex}{
    \begin{generationbox}{Llama 3.2 1B}
    \small
    the curiosity and passion for learning in our students, so that they may thrive in a world of opportunities and challenges. In this blog post, we will delved into the wonders of our STEM education programs with the aim of providing actionable insights and inspiration for parents and students alike. We have handpicked some of the best STEM education programs that we know best in this post. Read on to discover how your child’s academic journey can be enriched by these hands-on, state-of-the-art learning experiences.  \\
The Value of STEAM Education Programs  \\
The foundation of STEAM education programs is centered around fostering students’ skills in engineering, computation, social

    \textbf{Generation Speed: 316 tok/s}
    \end{generationbox}
}{}

\IfFileExists{completions/prompt3/vllm_smollm2_360m.tex}{
    \begin{generationbox}{SmolLM2 360M}
    \small
    curiosity and spark curiosity among our students.

The course structure follows the American School of Mankato's traditional 14 weeks model, allowing for flexibility and adaptation based on the student's interests and needs. Through this integrated approach, students engage in hands-on, project-based learning experiences that foster critical thinking, communication, collaboration, and problem-solving skills.

Our Philosophy of Teaching and Learning \\
We believe that learning should be enjoyable, both for the student and for the teacher. By emphasizing critical thinking, creativity, and passion, we create engaging lessons that spark joy within our young learners. Moreover, our commitment to

    \textbf{Generation Speed: 589 tok/s}
    \end{generationbox}
}{}

\IfFileExists{completions/prompt8/prompt.tex}{
    \begin{promptbox}
    \textbf{Prompt}\\[2pt]
    \small
    What You Should Know About Cortisone Shots \\
What are cortisone shots? \\
Cortisone shots are another term for steroid injections, as cortisone is a type of corticosteroid. There are many different types of steroid injections that can be used for many different reasons. In general, cort

    \end{promptbox}
}{}

\IfFileExists{completions/prompt8/human_reference.tex}{
    \begin{generationbox}{Human Reference}
    \small
    isone shots are used for relieving painful conditions that are related to something going wrong in the musculoskeletal system, whether in a joint, a tendon, muscle or the spine. \\
What conditions are cortisone shots used to treat? \\
Cortisone shots, also called steroid shots, are used to treat conditions in which there is pain related to inflammation, whether it's in a large joint such as the knee or shoulder or smaller joints as in the hand or ankle. \\
They can also be used for painful and inflammatory conditions from a herniated disc in the spine, whether it's in the lumb

    \end{generationbox}
}{}

\IfFileExists{completions/prompt8/vllm_ptp_3b_s16_300b_decay114436_pre_exact.tex}{
    \begin{generationbox}{PLM 3B Exact}
    \small
    icosteroid injections target the source of an underlying inflammatory condition. Steroid injections can target bone, joints, the skin, or the nerves as well as different types of tissues. \\
The cortisone injections that are frequently used for inflammatory conditions can include injections that are for the knees, elbows, spine, shoulders and wrists, and the cortisone shot may be placed in different parts of the anatomy depending on where the pain occurs. \\
Where should I avoid getting them? \\
Cortisone shots need to be reserved for cases where there's definitive evidence that a steroid injection is the most likely solution. First, the location of

    \textbf{Generation Speed: 285 tok/s}
    \end{generationbox}
}{}

\IfFileExists{completions/prompt8/vllm_ptp_3b_s16_300b_decay114436_pre_relaxed_1}{
    \begin{generationbox}{PLM 3B Relaxed}
    \small
    icosteroid injections are used to reduce pain and inflammation. These injections are the preferred form of corticosteroid therapy over oral corticosteroids since they are directly targeted to the area of the body that is affected with pain. Oral corticosteroids are commonly prescribed to treat inflammatory conditions of the body (arthritis, colitis, IBS, etc.) but are primarily intended to act systemically to provide inflammation. The downside is systemic side effects can have many negative side effects, which is why corticosteroid injections are preferred to treat pain or arthritis of a specific joint area. \\
Cortisone injections are very

    \textbf{Generation Speed: 430 tok/s}
    \end{generationbox}
}{}

\IfFileExists{completions/prompt8/vllm_ptp_1_3b_s16_80b_decay76293_pre_relaxed_1.tex}{
    \begin{generationbox}{PLM 1.3B Relaxed}
    \small
    isone can be very helpful. However, before you treat with it, it's important to be an informed participant and choose the right route. \\
Cortisone injections are an injection option for treatment for a number of conditions that are characterized by inflammation, which is the immune system's response to injury. The injections of cortisone are helpful for chronic pain that has been difficult to respond to with other treatments. \\
Cortisone shots are usually applied for a variety of conditions, including: \\
Inflammatory tendonitis, such as rotator cuff \\

 Question: Which of the following is NOT a use for cortis

    \textbf{Generation Speed: 856 tok/s}
    \end{generationbox}
}{}

\IfFileExists{completions/prompt8/vllm_ptp_350m_s16_20b_decay38146_pre_relaxed_1.tex}{
    \begin{generationbox}{PLM 350M Relaxed}
    \small
    isone injections are used for inflammation and other conditions related to the human body. Cortisone shots work by breaking the immune system to the area where your body is hurting. Cortisone shots can be used in surgery, chronic pain, and chronic cancer. Cortisone shots can be used to treat certain health and autoimmune problems, cancer, and arthritis. \\
How are the shots used? \\
The cortisone shot is applied in a vein with minimal to no discomfort and will be done by a doctor. It involves placing a steroid patch in the part of the body where a body part is involved. The shot is injected into a

    \textbf{Generation Speed: 2001 tok/s}
    \end{generationbox}
}{}

\IfFileExists{completions/prompt8/vllm_qwen3_4b_base.tex}{
    \begin{generationbox}{Qwen3 4B}
    \small
    isone or steroid injections are anti-inflammatory injections that can be used for a variety of conditions. Cortisone can also be taken internally in the form of oral medications or injections are usually placed directly at the site of pain or discomfort in order to reduce inflammation and improve mobility. Cortisone shots are often given to patients suffering from arthritis and other joint pain. \\
Steroid injections are used to treat pain or discomfort caused by arthritis in the hands, neck, knees, hips, back or shoulders, muscle pain and to accelerate the healing process of an injury, including tendonitis, bursitis and disc problems. They are also used for conditions

    \textbf{Generation Speed: 102 tok/s}
    \end{generationbox}
}{}

\IfFileExists{completions/prompt8/vllm_llama_3_2_1b.tex}{
    \begin{generationbox}{Llama 3.2 1B}
    \small
    isone is used when the patient has inflammation that is not responding to other medications. It is typically found in the upper leg or arm, or in the lower back with epidural injections. Usually, intravenous (IV) injections are given first. This method may provide an injection to the lowest place on the body where inflammation may have occurred. On average, steroid injections take about 10 to 15 minutes and may completely relieve the pain in the area. \\
This post is not intended for a diagnosis. Please use caution and seek professional medical help if the pain persists or is causing symptoms to affect your life. If you are considering steroid injections

    \textbf{Generation Speed: 316 tok/s}
    \end{generationbox}
}{}

\IfFileExists{completions/prompt8/vllm_smollm2_360m.tex}{
    \begin{generationbox}{SmolLM2 360M}
    \small
    isone is used painlessly to reduce swelling and inflammation, but only in a very low dose (whereas a cortisone shot is used in higher doses). You should see your doctor if you know you’re sensitive to cortisone (an allergy which causes an itchy or burning rash or fever), as cortisone shots have the potential to cause rebound conditions such as muscle swelling. However, doctors generally recommend avoiding cortisone shots for up to 5 years as repeated injections of cortisone in combination with regular injection of an anti-inflammatory medication can result in high blood pressure. 

What are the benefits of cort

    \textbf{Generation Speed: 589 tok/s}
    \end{generationbox}
}{}

\end{document}